\setlist[enumerate]{leftmargin=.5in}
\setlist[itemize]{leftmargin=.5in}
\crefname{hypothesis}{Hypothesis}{Hypotheses}
\title{Conditional Pseudo-Reversible Normalizing Flow for Surrogate Modeling in Quantifying Uncertainty Propagation
\thanks{
% Submitted to the editors DATE.
% \funding
{This manuscript has been authored by UT-Battelle, LLC, under contract DE-AC05-00OR22725 with the US Department of Energy (DOE). The US government retains and the publisher, by accepting the article for publication, acknowledges that the US government retains a nonexclusive, paid-up, irrevocable, worldwide license to reproduce the published form of this manuscript, or allow others to do so, for US government purposes. DOE will provide public access to these results of federally sponsored research in accordance with the DOE Public Access Plan.}}}
\author{Minglei Yang\thanks{Fusion Enery Science Division, Oak Ridge National Laboratory, Oak Ridge, TN 37831.}
\and 
Pengjun Wang\thanks{Department of Mathematics and Statistics, Auburn University, Auburn, AL 36849.}
\and 
Ming Fan\thanks{Computational Sciences and Engineering Division, Oak Ridge National Laboratory, Oak Ridge, TN 37831.}
\and 
Dan Lu\footnotemark[4]
\and
Yanzhao Cao\footnotemark[3]
\and
Guannan Zhang\thanks{Corresponding author. Computer Science and Mathematics Division, Oak Ridge National Laboratory, Oak Ridge, TN 37831. (zhangg@ornl.gov)}
}
\begin{document}

\maketitle

% REQUIRED
\begin{abstract}

We introduce a conditional pseudo-reversible normalizing flow for constructing surrogate models of a physical model polluted by additive noise to efficiently quantify forward and inverse uncertainty propagation. Existing surrogate modeling approaches  usually focus on approximating the deterministic component of physical model. However, this strategy necessitates knowledge of noise and resorts to auxiliary sampling methods for quantifying inverse uncertainty propagation. In this work, we develop the conditional pseudo-reversible normalizing flow model to directly learn and efficiently generate samples from the conditional probability density functions. The training process utilizes dataset consisting of input-output pairs without requiring prior knowledge about the noise and the function. Our model, once trained, can generate samples from any conditional probability density functions whose high probability regions are covered by the training set. Moreover, the pseudo-reversibility feature allows for the use of fully-connected neural network architectures, which simplifies the implementation and enables theoretical analysis. We provide a rigorous convergence analysis of the conditional pseudo-reversible normalizing flow model, showing its ability to converge to the target conditional probability density function using the Kullback–Leibler divergence. To demonstrate the effectiveness of our method, we apply it to several benchmark tests and a real-world geologic carbon storage problem.

\end{abstract}

% REQUIRED
\begin{keywords}
Normalizing flows, Surrogate modeling, conditional probability distribution, generative models
\end{keywords}

% REQUIRED
\begin{MSCcodes}
68Q25, 68R10, 68U05
\end{MSCcodes}

% \begin{frontmatter}

% \title{Conditional Pseudo-Reversible Normalizing Flow for Surrogate Modeling in Quantifying Uncertainty Propagation}

% \author[a]{Minglei Yang}
% \author[b]{Pengjun Wang}
% \author[c]{Ming Fan}
% \author[c]{Dan Lu}
% \author[b]{Yanzhao Cao}
% \author[e]{Guannan Zhang}

% \address[a]{Fusion Enery Science Division, Oak Ridge National Laboratory, Oak Ridge, TN 37934.}
% \address[b]{Department of Mathematics and Statistics, Auburn University, Auburn, AL 36849.}
% \address[c]{Computational Sciences and Engineering Division, Oak Ridge National Laboratory, Oak Ridge, TN 37831.}
% \address[e]{Computer Science and Mathematics Division, Oak Ridge National Laboratory, Oak Ridge, TN 37831.}

% \begin{keyword}
% Uncertainty estimation, normalizing flows, inverse problems, conditional distribution
% \end{keyword}

% \tnotetext[fn1]{{\bf Notice}: This manuscript has been authored by UT-Battelle, LLC, under contract DE-AC05-00OR22725 with the US Department of Energy (DOE). The US government retains and the publisher, by accepting the article for publication, acknowledges that the US government retains a nonexclusive, paid-up, irrevocable, worldwide license to publish or reproduce the published form of this manuscript, or allow others to do so, for US government purposes. DOE will provide public access to these results of federally sponsored research in accordance with the DOE Public Access Plan.}
% \end{frontmatter}

\section{Introduction}\label{sec:intro}

Uncertainty quantification (UQ) involves the process of identifying and understanding the uncertainties inherent in physical models, numerical simulations, or experimental data. It is extensively applied in fields such as material science, physics, and other areas characterized by complex scientific models. The evolution of mathematical methodologies and computational power has profoundly enhanced our grasp of the dynamics within physical models and the propagation of uncertainties. While these advanced techniques are typically effective in small and straightforward models, their accuracy may decline when applied to complex physical models with numerous parameters, large datasets, or insufficient input data, highlighting the necessity for continued exploration in uncertainty estimation.

% The critical role of uncertainty quantification necessitates efficient computational models, making the construction of surrogate models for conditional probability density functions $p(\bm y|\bm x)$ and $p(\bm x|\bm y)$ a key strategy for achieving computational efficiency in uncertainty analysis. Existing surrogate modeling techniques, e.g., polynomial chaos expansions \cite{kersaudy2015new}, kernel methods \cite{wirtz2015surrogate}, and Gaussian process regression \cite{zhou2005study}, are adept at approximating deterministic model components ${\bm f}({\bm X})$. However, they often inadequately capture the uncertainty introduced by 
% ${\bm \varepsilon}({\bm X})$ without the knowledge of its distribution or resorting to auxiliary sampling methods for quantifying inverse propagation $p(\bm x|\bm y)$. This underscores the need for advanced surrogate models that integrate both deterministic and stochastic aspects of models, thereby enhancing the comprehensiveness and efficiency of UQ efforts.

The critical role of uncertainty quantification necessitates efficient computational models, making the construction of surrogate models for conditional probability density functions  a key strategy for achieving computational efficiency in uncertainty analysis. Existing surrogate modeling techniques, e.g., polynomial chaos expansions \cite{kersaudy2015new}, kernel methods \cite{wirtz2015surrogate}, and Gaussian process regression \cite{zhou2005study}, are adept at approximating deterministic model components. However, they often inadequately capture the uncertainty introduced by 
noise without the knowledge of its distribution or resorting to auxiliary sampling methods for quantifying inverse uncertainty propagation. This underscores the need for advanced surrogate models that integrate both deterministic and stochastic aspects of models, thereby enhancing the comprehensiveness and efficiency of UQ efforts.

Deep learning approaches have gained significant attention for uncertainty estimation, handling complex datasets and high-dimensional challenges effectively \cite{abdar2021review,psaros2023uncertainty}. Techniques such as variational inference \cite{blei2017variational}, Bayesian neural networks \cite{wilson2020bayesian,jospin2022hands}, rooted in Bayesian models, offer a rigorous framework for managing model uncertainty. However, these methods face computational difficulties, including high parameter counts and slow convergence, which are deteriorated when involving additional physical constraints. Solutions like MC dropout \cite{gal2016dropout} and stochastic variational inference \cite{hoffman2013stochastic} mitigate some computational issues. Moreover, Bayesian inference is highly influenced by prior information and has to retrain for the new data. To tackle this challenge, conditional distributions, e.g., amortized variational distributions \cite{padmanabha2021solving,karumuri2023learning}, are employed as a strategy to evaluate the distribution at new sample points.

Another type of deep learning approaches is to construct surrogate models utilizing generative models, e.g., generative adversarial networks (GANs) \cite{goodfellow2014generative}, variational autoencoders (VAEs) \cite{kingma2013auto}, and diffusion models \cite{yang2022diffusion}, to tackle uncertainty estimation \cite{sensoy2020uncertainty, ratzlaff2019hypergan, bohm2019uncertainty}. Generative models can offer an optimal prior for data knowledge and seek the maximum of the posterior distribution.
Normalizing flow, as a generative model, has the capability to represent complex posterior distributions. The normalizing field flows (NFF) model \cite{guo2022normalizing} is introduced to measure uncertainties arising from both forward and inverse stochastic partial differential equations. Ref \cite{selvan2020uncertainty} introduced a conditional normalizing flow model to capture medical image segmentation by extending the expressivity of conditional variational autoencoders \cite{zhao2017learning}.
However, it is important to note that while normalizing flows have been successfully applied in uncertainty estimation, they rely on invertible bijections, which imposes unfavorable constraints on their scalability and the range of distributions they can effectively learn.

%  The existing Conditional Normalizing Flows (CNF) are crucial for modeling complex conditional distributions, especially in image datasets. Their application ranges from improving image resolution, as shown by \cite{winkler2019learning}, to noise reduction and attribute-based image editing within GAN latent spaces, demonstrated by \cite{abdelhamed2019noise} and \cite{abdal2021styleflow}. Furthermore, \cite{gudovskiy2022cflow} applies CNF for anomaly detection in images, while \cite{whang2021composing} uses it for tasks like image inpainting. Additionally, \cite{xiao2019method,ardizzone2018analyzing} focus on using CNF to solve inverse problems, testing their approaches on datasets including Gaussian Mixtures and MNIST. These studies collectively show the wide-ranging utility of CNF in image generation and processing, emphasizing its significance in this domain.

In this work, we introduce a conditional pseudo-reversible normalizing flow (PR-NF) as a surrogate model to effectively quantify uncertainty propagation. Our goal is to accurately determine the conditional distributions for physical models in both forward  and inverse directions. We demonstrate that the PR-NF model can leverage the same dataset and neural network architecture to characterize uncertainties in both directions, illustrating the model's robustness against variations in the input-output function relationship.
The PR-NF architecture utilizes a straightforward feed-forward neural network to simulate flows, integrating an additional loss term to ensure the reversibility. This innovative neural network structure offers enhanced flexibility for flow transformations, potentially boosting model performance. Notably, the adoption of GPU-accelerated  libraries markedly accelerates the computation of the Jacobian determinant and PyTorch backpropagation, reducing computational costs.
We conduct a comprehensive convergence analysis for the conditional PR-NF model, specifically focusing on its convergence towards the target conditional distribution of uncertainty, a metric quantified using the Kullback–Leibler divergence. Once trained, the PR-NF model efficiently produces accurate outputs for new samples within the domain of the training dataset, eliminating the need for retraining. This significantly improves the efficiency of uncertainty estimation and sampling. However, it is important to note that the PR-NF model is designed to operate within its training domain and does not inherently possess the capability to predict beyond this initial domain.

The rest of this paper is structured as follows:
In Section \ref{sec:problem}, we define the surrogate modeling problem of interest.
The details about the conditional pseudo-reversible normalizing flow model are discussed in Section \ref{sec:method}.
Section \ref{sec:analy} provides the convergence analysis of PR-NF model on uncertainty estimation.
Finally, in Section \ref{sec:example}, we present various examples, including a  validation model with different types of additive noise, high-dimensional uncertainty estimation problems, and an application in earth system science. The conclusion is in Section \ref{sec:con}.

%%%%%%%%%
%
\section{Problem setting}\label{sec:problem}
This section introduces the surrogate modeling problem under consideration.
We consider a physical model of the form
\begin{equation}\label{eq:problem}
      {\bm Y} = {\bm f}({\bm X}) + {\bm \varepsilon}({\bm X}),
\end{equation}
where $\bm X \in \mathbb{R}^d$ is a $d$-dimensional random vector denoting the input of the model, ${\bm f}:\mathbb{R}^{d} \rightarrow \mathbb{R}^s$ is a continuously differentiable function that represents the deterministic component of the model, ${\bm Y} \in \mathbb{R}^s$ is a $s$-dimensional random vector denoting the output of the model, and $\bm \varepsilon(\bm X)$ denotes the additive random noise satisfying 
$
\mathbb{E}[{\bm \varepsilon}({\bm X})] = 0$ and $\mathbb{E}[{\bm \varepsilon^2}({\bm X})] = {\bm v}({\bm X}) 
$
with ${\bm v}(\cdot)$ being a deterministic and bounded function. 
The goal is to efficiently compute two types of quantities of interest (QoIs), i.e., 
\begin{equation}\label{eq:forward}
    \text{(Forward QoI)}\quad Q_{\rm forward}(\bm x) = \int_{\mathbb{R}^s} q_{\rm forward}(\bm y)\; p( \bm y | \bm x)\, d\bm y\; \text{ for }\;\bm X = \bm x,
\end{equation}
where $q_{\rm forward}(\bm y)$ is a function of the model's output and $p( \bm y | \bm x)$ is the conditional probability density function (PDF) describing how the uncertainty of $\bm \varepsilon(\bm x)$ is propagated to $\bm Y$ for $\bm X = \bm x$, and 
\begin{equation}\label{eq:inverse}
    \text{(Inverse QoI)}\quad Q_{\rm inverse}(\bm y) = \int_{\mathbb{R}^d} q_{\rm inverse}(\bm x)\; p( \bm x | \bm y)\, d\bm x\; \text{ for }\;\bm Y = \bm y, 
\end{equation}
where $q_{\rm inverse}(\bm x)$ is a function of the model's input  and $p(\bm x |\bm y)$ is the conditional PDF of $\bm X$ for any fixed value of the model's output $\bm Y = \bm y$. 

In practice, it is common to define a prior distribution of $\bm X$ to be one of the classical distributions, e.g., uniform or normal distributions, based on knowledge of the physical problem under consideration. Thus, we assume that the input $\bm X$ follows a prior PDF, denoted by 
\begin{equation}\label{eq:prior}
 \bm X \sim p(\bm x),
\end{equation}
and we are capable of efficiently generating unlimited number of samples of $\bm X$ from the prior $p(\bm x)$. The definitions of $p(\bm x)$, $\bm f(\cdot)$ and $\bm\varepsilon (\cdot)$ can uniquely determine the joint distribution $p(\bm x, \bm y)$ and the marginal distribution $p(\bm y) = \int p(\bm x,\bm y) d\bm x$, such that the conditional PDF $p(\bm x|\bm y)$ in Eq.~\eqref{eq:inverse} is defined by 
\begin{equation}\label{eq:joint}
    p(\bm x | \bm y) = \frac{p(\bm x, \bm y)}{p(\bm y)},
\end{equation}
based on the conditional probability formula. Note that Eq.~\eqref{eq:joint} is only the definition of the condition PDF $p(\bm x|\bm y)$, but we do not know how to draw samples from $p(\bm x|\bm y)$. 

\subsection{Challenges of building surrogate models for the conditional PDFs}\label{sec:challenge}
When the physical model $\bm f(\bm X)$ is computationally expensive to evaluate, computing the QoIs in Eqs.~\eqref{eq:forward} and \eqref{eq:inverse} may become computationally infeasible. In this scenario, building a surrogate model to replace the time-consuming physics model is a widely used strategy to significantly improve the computational efficiency in computing the QoIs. Existing surrogate modeling approaches, e.g., polynomial chaos, kernel methods, or neural networks, mainly focus approximating the deterministic function $\bm f(\bm X)$ in Eq.~\eqref{eq:problem}. 
However, the strategy of approximating $\bm f(\bm X)$ has the following disadvantages:
\vspace{0.1cm}
\begin{itemize}[leftmargin=20pt]\itemsep0.15cm
    \item It requires capability of generating samples of the noise $\bm \varepsilon(\bm X)$ in order to obtain samples of the conditional PDF $p(\bm y |\bm x)$. When we do not know the distribution of $\bm \varepsilon(\bm X)$, we cannot use the deterministic surrogate of $\bm f(\bm X)$ to compute the forward QoI in Eq.~\eqref{eq:forward}.
    \item It requires another sampling method, e.g., Markov Chain Monte Carlo (MCMC), to generate samples from the conditional PDF $p(\bm x| \bm y)$ to compute the inverse QoI in Eq.~\eqref{eq:inverse}. 
\end{itemize}
\vspace{0.1cm}
To address these challenges, we intend to utilize a normalizing flow based generative model to directly learn how to generate samples from the conditional PDFs $p(\bm y|\bm x)$ in Eq.~\eqref{eq:forward} and 
$p(\bm x | \bm y)$ in Eq.~\eqref{eq:inverse}. 
The trained normalizing flow models will serve as the surrogate models for the conditional PDFs, such that we can generate unlimited samples of the conditional PDFs to efficiently compute the QoIs in Eqs.~\eqref{eq:forward} and \eqref{eq:inverse}.

% Consequently, we can evaluate and generate samples from the distribution $p({\bm y}|{\bm x})$ by applying the change of variable formula. Our model relies on a feed-forward neural network architecture without any additional parameters, sequential layers, or prior assumptions (as discussed in Section~\ref{sec:method}). We provide a rigorous convergence analysis of our model in Section~\ref{sec:analy}.

% Quantitative estimation in Eq.\eqref{eq:forward} and Eq.\eqref{eq:inverse} has been widely studied in machine learning methods that employ Bayesian probability theory \cite{abdar2021review,jospin2022hands,izmailov2020subspace}, such as Bayesian neural networks. However, these models come with a significant computational cost because they require more time to reach convergence and additional neural network parameters to represent uncertainty. Additionally, the performance of Bayesian inference relies heavily on the prior distribution of $p(\bm y)$, which may not always be accurate or available \cite{gal2016dropout}.
% %

\section{The pseudo-reversible normalizing flow (PR-NF) for learning conditional PDFs}\label{sec:method}
In this section, we explain in detail the proposed conditional generative model for computing the QoIs. To proceed, we define the training dataset as 
\begin{equation}\label{eq:train_set}
    \mathcal{D}_{\rm train} := \left\{\bm x^{(n)}, \bm y^{(n)} \right\}_{n=1}^N
    \;\; \text{with} \;\; \bm y^{(n)} = \bm f\left(\bm x^{(n)}\right) + \bm \varepsilon^{(n)},
\end{equation}
where $\bm y^{(n)}$ is the output of the physical model in Eq.~\eqref{eq:problem} for the input $\bm x^{(n)}$. Hereafter, we assume a purely data-driven scenario, which means we will need to train a surrogate model of the conditional PDFs of interest without using any information about the model $\bm f$ and the noise $\bm \varepsilon$. Taking $p(\bm y | \bm x)$ as an example, our objective is to build and train a conditional generative model, denoted by 
\begin{equation}\label{eq:gen}
    \bm Y| \bm X \approx \widehat{\bm Y} = \bm G(\bm Z, \bm X; \bm \theta_{\bm G}),
\end{equation}
where $\bm Z$ follows the standard normal distribution $\mathcal{N}(0, \mathbf{I}_p)$, $\bm X$ is the input of the physics model in Eq.~\eqref{eq:problem}, $\bm \theta_{\bm G}$ denotes the trainable parameters of $\bm G$. After training, the output of the generator approximates the target conditional random variable $\bm Y |\bm X$, such that the forward QoI in Eq.\eqref{eq:forward} can be efficiently computed by drawing samples from $\mathcal{N}(0, \mathbf{I}_p)$ and pushing through the generator $\bm G$. A similar generator can be constructed for the conditional PDF $p(\bm x|\bm y)$. In this effort, we construct the conditional generative model in Eq.~\eqref{eq:gen} by the PR-NF architecture introduced in the next subsection.

\subsection{The architecture of conditional PR-NF model}\label{sec:PRNF}
The standard pseudo-reversible architecture is similar to auto-encoders except that the bottle neck layer has the same width as the input and output layers. Compared to exactly reversible neural networks, the pseudo-reversible architecture has two advantages. First, it allows the use of simple fully-connected networks which greatly simplifies the implementation of normalizing flow models. Second, it allows us to conduct rigorous convergence analysis of normalizing flow models, which has a great significance from the mathematics perspective. In this section, we will explain how to extend the standard NF to the conditional PR-NF. 

% In this section, we will explore the pseudo-reversible normalizing flow (PR-NF) model in the context of uncertainty estimation. This model was originally introduced to generate samples and evaluate the distribution of the state of stochastic differential equations in Ref \cite{yang2023pseudo}.
% We can understand a normalizing flow model as a nonlinear transformation that converts an unknown complicate target random variable, labeled as $W$, into a standard simple random variable, such as the standard Gaussian, denoted as $Z$.
% %
% This section explains how the PR-NF model addresses the forward problem in Eq.~\eqref{eq:forward}. Extending the PR-NF model to handle the inverse problems as described in Eq.~\eqref{eq:inverse} is a straightforward process.

For notational simplicity, we introduce a new random vector $\bm W$ defined by 
\begin{equation}\label{eq:simple}
    \bm W:=(\bm X, \bm Y|\bm X) \in \mathbb{R}^{d+s},
\end{equation}
which is the concatenation of $\bm X$ and $\bm Y|\bm X$.
% The forward problem in Eq.~\eqref{eq:forward} is then converted to compute the probability density function $p_W({\bm w})$.
%
A normalizing flow model includes an encoding transport map $\bm h$ and an decoding transport map $\bm g$, i.e.,
\begin{equation}\label{eq:NF}
    \bm Z =  \bm h(\bm W; \bm \theta_{\bm h})\;\; \text{ and }\;\; \widehat{\bm W} =  \bm g(\bm Z; \bm \theta_{\bm g}),
\end{equation}
where $\widehat{\bm W}$ is the approximation of $\bm W$ provided by the normalizing flow model, and $\bm \theta_{\bm h}$, $\bm \theta_{\bm g}$ are trainable parameters of the transformations. In standard normalizing flow models, e.g., MAF\cite{papamakarios2017masked} and Real NVPs\cite{dinh2016density}, the transport maps $\bm h$ and $\bm g$ are usually defined by exact reversible neural networks, such that $\bm g = \bm h^{-1}$ and $\widehat{\bm W} = \bm W$. In practice, the enforcement of reversibility could limit the expressive power and make it difficult to conduct any theoretical analysis. In PR-NF, 
$\bm h$ and $\bm g$ are defined by two independent fully-connected neural networks and the pseudo-reversibility is enforced by incorporating a soft constraint $\|\widehat{\bm W} - \bm W\|_2^2$ into the loss function introduced in Section \ref{sec:loss}.
Let $p_{\bm W}(\bm w)$ and $p_{\bm Z}(\bm z)$ represent the PDFs of variables $\bm W$ and $\bm Z$, respectively. The connection between these two PDFs can be established using the change of variables formula, i.e., 
\begin{equation}
\begin{aligned}
p_{\bm W}({\bm w}) = p_{\bm Z}({\bm z}) \left| \frac{\partial {\bm z}}{\partial {\bm w}} \right| &= p_{\bm Z}({\bm h}({\bm w}))
|{\rm det} {\mathbf{J}_{\bm h}}({\bm w})|,
\end{aligned}
\end{equation}
where ${\rm det} {\mathbf{J}_{\bm h}}({\bm w})$ is the determinant of the Jacobian matrix $\mathbf{J}_{\bm h}$ of the encoding map $\bm h(\cdot)$.
\begin{figure}[h!]
    \centering
  {\includegraphics[width=0.7\textwidth]{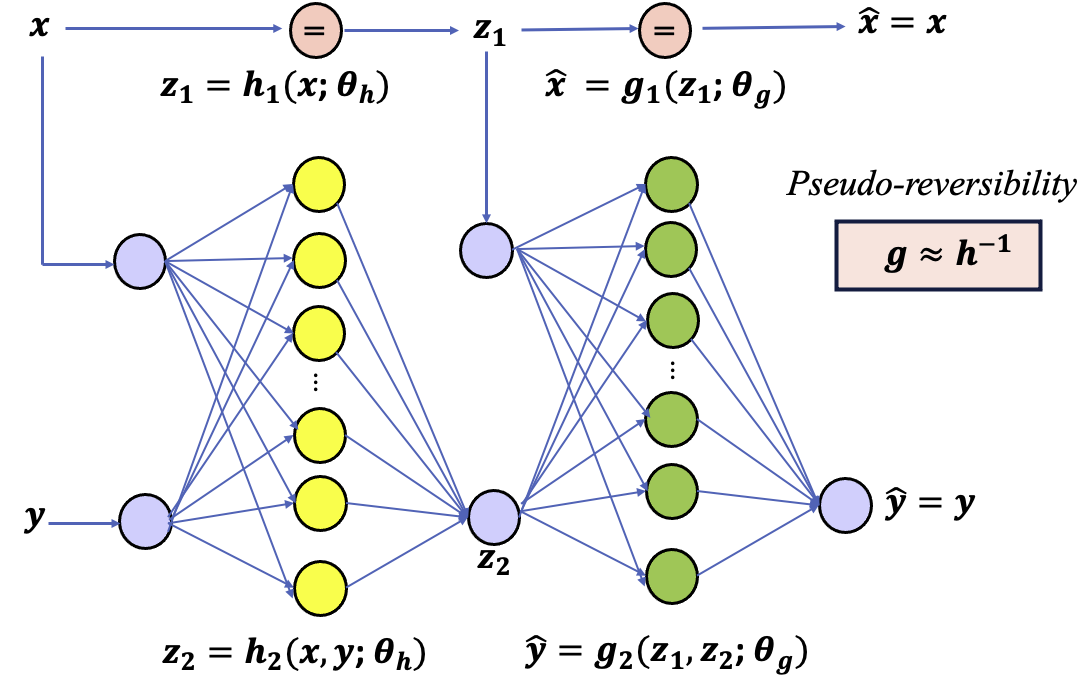}}
  \vspace{-0.3cm}
    \caption{The architecture of the proposed conditional pseudo-reversible normalizing flow (PR-NF) model. The novelty is that the input vector $\bm x$ of the physics model in Eq.~\eqref{eq:problem} is incorporated into the hidden neurons of $\bm h(\cdot)$ and $\bm g(\cdot)$. In other words, both $\bm h(\cdot)$ and $\bm g(\cdot)$ are parameterized by $\bm x$, which is the main reason why the transport maps can learn the conditional PDF $p(\bm y|\bm x)$. The pseudo-reversibility $\bm g \approx {\bm h}^{-1}$ is enforced as a soft constraint in the loss function. }
    \label{fig_nf_x0}
    \vspace{-0.3cm}
\end{figure}

The architecture of the conditional PR-NF neural network is illustrated in Figure \ref{fig_nf_x0}. Specifically, the transport map $\bm h(\cdot)$ consists of two components: $\bm h(\bm x, \bm y|\bm x) = (\bm h_1(\bm x), \bm h_2(\bm x, \bm y|\bm x; \bm \theta_h))$. Here, $\bm h_2(\bm x, \bm y|\bm x; \bm \theta_{\bm h})$ represents a fully connected neural network with parameter $\bm \theta_{\bm h}$, mapping from $\mathbb{R}^{d+s}$ to $\mathbb{R}^s$. On the other hand, $\bm h_1(\bm x)$ serves as an identity mapping, dependent on $\bm x$.
To describe the conditional relationship between $\bm x$ and $\bm y$ within the conditional distribution $p(\bm y|\bm x)$, both are included as input features in the neural network. This allows the neural network to learn the distribution of one input variable, $\bm y$, conditioned on the other, $\bm x$.
We denote the output of $\bm h_1(\bm x)$ as $\bm z_1$ and the output of $\bm h_2(\bm x, \bm y|\bm x; \bm \theta_{\bm h})$ as $\bm z_2$.
The inverse map $\bm g$ exhibits a similar structure to the forward map, expressed as $\bm g(\bm z_1, \bm z_2) = (\bm g_1(\bm z_1), \bm g_2(\bm z_1, \bm z_2; \bm \theta_g))$. Here, $\bm g_1$ serves as an identity map, while $\bm g_2$ is a fully connected neural network with parameter $\bm \theta_{\bm g}$. The outputs of $\bm g_1$ and $\bm g_2$ are denoted as $\widehat{\bm x}$ and $\widehat{\bm y}$, respectively. It's worth noting that $\widehat{\bm x} = \bm z_1 = \bm x$ because both $\bm h_1$ and $\bm g_1$ are identity mappings.

\subsection{The loss function for training the conditional PR-NF}\label{sec:loss}
We can rewrite the training set in Eq.~\eqref{eq:train_set} using the simplified notation in Eq.~\eqref{eq:simple}, i.e.,
\begin{equation}\label{eq:train}
\mathcal{D}_{\rm train} = \left\{ \bm w^{(n)}\right\}_{n=1}^{N} =\left\{\bm x^{(n)}, \bm y^{(n)}| \bm x^{(n)}\right\}_{n=1}^{N},
\end{equation}
where $\bm y^{(n)}| \bm x^{(n)} = \bm y^{(n)}$ because $\bm y^{(n)}$ is obtained by push $\bm x^{(n)}$ through the physical model in Eq.~\eqref{eq:problem}.
%
% where the set of parameter samples, denoted as $\{\bm x^{(n)}\}_{n=1}^N$, can either be generated from a uniform distribution across the bounded domain $\mathcal{D}$ or be extensive enough to encompass all potential scenarios.
%
The loss function consists of two components, i.e., 
\begin{equation}\label{eq_loss}
\mathcal{L} = \mathcal{L}_1 + \lambda \mathcal{L}_2,
\end{equation}
where $\mathcal{L}_1$ is the negative log-likelihood loss defined by
\begin{equation}\label{eq:loglike}
 \mathcal{L}_1 = - \frac{1}{N}\sum_{n=1}^{N} \left({\rm log}p_{{\bm Z}_2}({ \bm h}_2({\bm w}^{(n)}; \bm \theta_{h})) +  {\rm log} \left|{\rm det} {\bf J}_{\bm h} ({\bm w}^{(n)}; \bm \theta_{\bm h})\right|\right),
\end{equation}
with $p_{{\bm Z}_2}(\cdot)$ is the probability density function of the standard normal distribution, and $\mathcal{L}_2$ is the pseudo-reversibility loss that measures the difference between ${\bm w}$ and $\widehat{\bm w}$, i.e.,
\begin{equation}\label{eq:loss2}
    \mathcal{L}_2 = \frac{1}{N} \sum_{n=1}^{N} \left( \left\|{\bm w}^{(n)} - {\bm g} ( {\bm h}({\bm w}^{(n)} ; {\bm \theta}_{\bm h});  {\bm \theta}_{\bm g})\right\|_2^2  + \left|\det {\bf J}_{\bm g}({\bm h}({\bm w}^{(n)})) \det {\bf J}_{\bm h}({\bm w}^{(n)}) - 1 \right| \right),
\end{equation}
where the hyperparameter $\lambda$ determines and balances the importance of the reversibility loss $\mathcal{L}_2$ in the total loss $\mathcal{L}$. 

A process of hyperparameter tuning is required to achieve optimal model performance. To proceed, we conduct a grid search and employ cross-entropy as our metric to determine the optimal value for $\lambda$. To do this, we create a set of candidate values denoted as $\{\lambda_j\}_{j=1}^J$. For each $\lambda_j$, we'll train a PR-NF model represented as $(\bm h^{(j)}(\bm w), \bm g^{(j)}(\bm z))$. Subsequently, we'll generate $M$ samples of $\widehat{\bm w}$ by applying the inverse mapping, i.e.,
\begin{equation}\label{eq:sample}
\left\{ \widehat{\bm w}^{(j,m)} = \bm g^{(j)}(\bm z^{(m)}), m = 1, \ldots, M \right\}, 
\end{equation}
where $\{\bm z^{(m)}\}_{m=1}^M = \{\bm z_1^{(m)}, \bm z_2^{(m)}\}_{m=1}^M$, with $\bm z_1^{(m)}$ generated uniformly from $\mathcal{D}$ and $\bm z_2^{(m)}$ sampled from the standard normal distribution. This approach allows us to generate a sufficient number of samples without additional training data. Subsequently, we construct a kernel density estimator using the samples outlined in Eq.\eqref{eq:sample}, followed by computing the cross-entropy as follows:
\begin{equation}\label{eq:cross}
    H(\lambda_j) = - \frac{1}{N} \sum_{n=1}^N \log(p_{\rm KDE}( \bm w_t^{(n)})),
\end{equation}
where $p_{\rm KDE}$ represents the kernel density estimator, which is constructed based on the samples in Eq.\eqref{eq:sample}, while $\{\bm w_t^{(n)}\}_{n=1}^N$ is the training dataset in Eq.\eqref{eq:train}. The optimal hyperparameter $\lambda$ is determined by minimizing the cross-entropy, formally expressed as $\lambda = \text{argmin} H(\lambda_j)$. The numerical experiments in Sec \ref{sec:example} use the optimal hyperparameter $\lambda$ that minimizes the cross-entropy.

\subsection{The computational cost of training the conditional PR-NF model}\label{sec:cost}
%
% The computational cost of a sampling method can be divided into two parts: the training cost and the evaluation cost. Once the PR-NF model is well-trained, the evaluation process becomes very efficient, taking only a few seconds, as there is no need for additional training. In this discussion, our focus is on the computational cost of the training process.
The computational challenges in training normalizing flow models are associated with calculation of the Jacobian determinant. In the conditional PR-NF model, the Jacobian determinant in $\mathcal{L}_1$ from Eq.~\eqref{eq:loglike} is expressed as:
\begin{equation}\label{eq_Jac_reduce}
    \left|{\rm det} {\mathbf{J}_{\bm h}}({\bm w})\right| = \left|\text{det}\begin{pmatrix}
\mathbf{I}_d, \quad  0\\
\frac{\partial \bm z_2}{\partial \bm x}, \frac{\partial \bm z_2}{\partial \bm y}
\end{pmatrix} \right| = \left|\text{det}\left(\frac{\partial \bm z_2}{\partial \bm y}\right)\right|,
\end{equation}
where $\mathbf{I}_d$ is a $d\times d$ identity matrix. 
The simplification in Eq.~\eqref{eq_Jac_reduce} is based on the fact that ${\bm h}_1$ functions as an identity map of $\bm x$. As a result, the size of the Jacobian matrix ${\mathbf{J}_{\bm h}}$ depends only on the dimension of the output variable $\bm y$. Because the PR-NF model does not have a special architecture (e.g., MAF\cite{papamakarios2017masked} and Real NVPs\cite{dinh2016density}) that leads to explicit calculation of the Jacobian determinant, we employ the QR decomposition to compute the determinant, which exhibits a computational complexity of $\mathcal{O}(s^3)$. However, with the advancement of GPU acceleration techniques, the cost of QR decomposition becomes an acceptable approach for moderately high-dimensional problems. 

Fig.~\ref{fig_Jac} displays the computational cost (wall-clock time) of both Pytorch backpropagation and the calculation of the Jacobian determinant using 5,000 samples, comparing CUDA GPU and CPU versions.
While we maintain a constant input parameter dimension of ${\bm x}$ at $d=20$, we systematically vary the dimension of the observation $\bm y$ from $s= 10, 20,\ldots, 160$. The architecture employs a fully-connected neural network with one hidden layer and 512 hidden neurons.
It's worth noting that as the dimension $s$ increases, the GPU proves to be effective in accelerating the PR-NF model, particularly in the case of backpropagation, which benefits from the GPU acceleration techniques.
On the other hand, we realize that the computational cost will become unaffordable when the dimension $s$ is extremely high. In this scenario, dimension reduction methods, e.g., auto-encoders, can be used to identify low-dimensional manifold of the training dataset $\mathcal{D}_{\rm train}$, and the conditional PR-NF is then applied in the latent space. We demonstrate this strategy in solving the geologic carbon storage problem in Section \ref{gcs}.

\begin{figure}[h!]
    \centering
  {\includegraphics[width=0.45\textwidth]{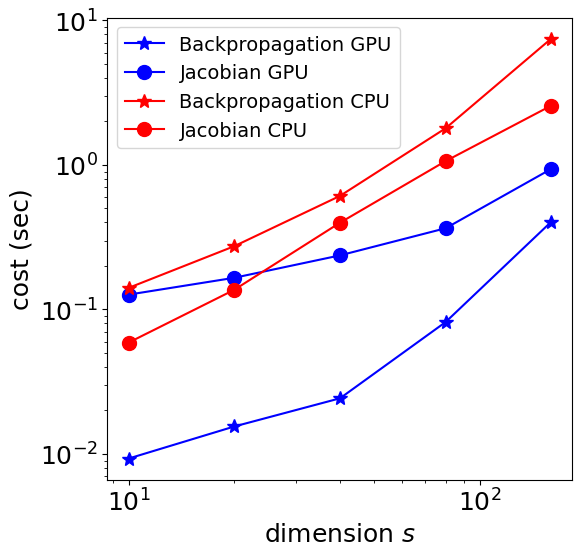}}
  \vspace{-0.2cm}
    \caption{The computational cost (wall-clock time) of both Pytorch Backpropagation and the calculation of the Jacobian determinant with 5000 samples, comparing CUDA GPU (blue) and CPU (red). The computational time is obtained by running our code on a workstation with Nvidia RTX A5000 GPU. It shows that GPU is effective in accelerating the PR-NF model, particularly in the case of backpropagation.}
    \label{fig_Jac}
    \vspace{-0.2cm}
\end{figure}

\section{Convergence analysis}\label{sec:analy}

In this section, we present an error analysis of the  proposed PR-NF model introduced in Section \ref{sec:method}.
Here we limit our attention to the forward problem ${\bm Y} = {\bm f}({\bm X}) + {\bm \varepsilon}({\bm X})$\footnote{The observation $\bm Y$ is a function in term of $\bm X$. In some places we use $\bm Y$ instead of $\bm Y|\bm X$ for notational simplicity.}.
We start by showing the convergence of the loss functions $\mathcal{L}_1$ and $\mathcal{L}_2$ in Lemma \ref{lem1}. Then, in Theorem \ref{them1}, we establish the Monte Carlo integration error related to these loss functions. Finally, in Theorem \ref{them2}, we show that with certain maid assumptions, the convergence of the loss function is sufficient to guarantee the convergence of the KL divergence between the ground truth distribution $p_{Y}$ and its approximation $p_{\widehat Y}$ obtained through the PR-NF model. 

\subsection{Convergence of the loss function in continuous form}
This section is to prove the convergence of loss functions $\mathcal{L}_1$ and $\mathcal{L}_2$ in Eq.~\eqref{eq_loss}. 
Equation 4.1 in Ref \cite{papamakarios2021normalizing} shows that for any pair of well-behaved\footnote {A well-behaved distribution $p_{\bm X}({\bm x})$ means that $p_{\bm X}({\bm x})>0$ for all ${\bm x}\in \mathbb{R}^d$, and all conditional probabilities ${\rm Pr}({\bm X}_i \le x_i | {\bm x}_{<i})$ are differentiable, for $i = 1, 2, \ldots, d$.} distributions $p_{\bm W}({\bm w})$ and $p_{\bm Z}({\bm z})$, there exists a diffeomorphism $\bm z = \bm k(\bm w) = (k_1(\bm w), \ldots, k_d(\bm w))$ that can transform $p_{\bm W}({\bm w})$ to $p_{\bm Z}({\bm z})$, i.e., 
\begin{equation}\label{eq_denp}
p_{\bm W}(\bm w) = p_{\bm Z}(\bm k(\bm w))|\det {\bf J}_{\bm k}(\bm w)|.
\end{equation}
In our analysis, we have the following Assumption \ref{ass1} for the diffeomorphism ${\bm k}$.
\begin{assum}\label{ass1}
We assume that the determinant of the Jacobian matrix and any partial derivative of $\bm k \in C^1(\mathbb{R}^{d+s})$ are bounded, i.e., 
 \begin{equation}
     \left|\frac{\partial k_i(\bm w)}{\partial \bm w_j}\right| < A_1 < \infty,
    \quad |\det \mathbf{J}_{\bm k}(\bm w)| > A_2 > 0,
 \end{equation}
where $A_1$ and $A_2$ are positive constants and $i,\ j \in \{1,\ldots,{d+s}\}$. 
Additionally, the probability density function $p_{\bm W}(\bm w)$ is bounded, all the conditional 
probabilities are differentiable, and the decay of $p_{\bm W}$ at the tail is Gaussian like, i.e., there exists a positive constant $K$ such that  
 \begin{equation}\label{assum_3}
     p_{\bm W}(\bm w) < A_3\exp(- \alpha\|\bm w\|^2)\; \text{ for }\; \|\bm w\| > K,
 \end{equation}
 where $\alpha>0$ and $A_3>0$ are positive constants and $\|\cdot\|$ denotes the $l^2$ norm.
\end{assum}

The proposed PR-NF model can be viewed as an approximation of the diffeomorphism. Specifically, two networks $\bm h(\bm w)$ and $\bm g(\bm z)$ in Eq.~\eqref{eq:NF} are approximations of  $\bm k$ and $\bm k^{-1}$, respectively. 
We first define auxiliary random variables 
\begin{equation}\label{eq:xxx}
\widetilde{{\bm W}} = {\bm h^{-1}}({\bm Z})\; \text{ and } \; \widehat{{\bm W}} = {\bm g}({\bm Z}),
\end{equation}
where ${\bm Z}$ follows the standard normal distribution. 
Using the change of variables formula, the probability density function of $\widetilde{{\bm W}}$ and $\widehat{{\bm W}}$ are defined by
\begin{equation}\label{p_xtilde}
    p_{\widetilde {\bm W}}(\bm w) = p_{\bm Z}(\bm h (\bm w))|\det {\bf J}_{\bm h}(\bm w)|, \quad p_{\widehat {\bm W}}(\bm w) = p_{\bm Z}(\bm g^{-1} (\bm w))|\det {\bf J}_{\bm g^{-1}}(\bm w)|.
\end{equation}
For simplicity, we consider the loss functions in the continuous form, i.e.,
\begin{equation}\label{con_l1}
\begin{aligned}
    \widetilde{\mathcal{L}}_1 =&-\int_{\mathcal{D}_{\bm W}} p_{\bm W}(\bm w) \left(\log p_{{\bm Z}_2}(\bm h_2(\bm w)) + \log |\det {\bf J}_{\bm h}(\bm w)| \right) d\bm w\\
    =& -\int_{\mathcal{D}_{\bm W}} p_{\bm W}(\bm w) (\log p_{\widetilde{{\bm W}}}(\bm w) - \log p_{{\bm Z}_1}(\bm z_1)) d\bm w.
\end{aligned}
\end{equation}
and 
\begin{equation}\label{con_l2}
\widetilde{\mathcal{L}}_2 = \int_{\mathcal{D}_{\bm W}} p_{\bm W}(\bm w)\left(\|{\bm g}({\bm h}(\bm w))-\bm w\|^2 + |\det {\bf J}_{\bm g}({\bm h}(\bm w)) \det {\bf J}_{\bm h}(\bm w) - 1|\right)d\bm w,
\end{equation}
where $\mathcal{D}_{\bm W}$ is the domain of variable ${\bm W}$. For variables ${\bm W}$ and $\widetilde{{\bm W}}$ we have the following Assumption \ref{ass2}.
\begin{assum}\label{ass2}
We assume  density functions $p_{\bm W}(\bm w)$ and $p_{\widetilde {\bm W}}(\bm w)$ satisfy
 \begin{equation}\label{assum_4}
     A_4\exp(-\alpha \|\bm w\|^2) < \frac{p_{\bm W}(\bm w)}{p_{\widetilde{{\bm W}}}(\bm w)} < A_4\exp(\alpha \|\bm w\|^2),
 \end{equation}  
 where $\alpha$ and $A_4$ are positive constants.
\end{assum}

Under the Assumptions \ref{ass1} and \ref{ass2}, we have the following Lemma \ref{lem1} for the convergence of the loss functions $\widetilde{\mathcal{L}}_1$ and $\widetilde{\mathcal{L}}_2$.

\begin{lemma}[Theorems 4.6 \& 4.7 in \cite{yang2023pseudo}]\label{lem1}
    Under the Assumptions \ref{ass1}, \ref{ass2}. For an arbitrarily small $\varepsilon>0$, there exist two independent single-hidden-layer neural networks $\bm h$ and $\bm g$ such that 
\begin{equation}\label{conl_bound}
    0\le \widetilde{\mathcal{L}}_1 +\int_{\mathbb{R}^d} p_{\bm W}(\bm w) (\log p_{\bm W}(\bm w) - \log p_{{\bm Z}_1}(\bm z_1)) d\bm w <\varepsilon, \,\, \text{and} \quad \widetilde{\mathcal{L}}_2 < \varepsilon.
\end{equation}
\end{lemma}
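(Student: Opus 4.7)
My plan is to first observe that the non-negativity and the form of the first inequality follow immediately by rewriting as a Kullback–Leibler divergence. Combining $\widetilde{\mathcal{L}}_1$ in Eq.~\eqref{con_l1} with the added integral and using the definition of $p_{\widetilde{\bm W}}$ in Eq.~\eqref{p_xtilde} gives
$$\widetilde{\mathcal{L}}_1 + \int p_{\bm W}(\bm w)\bigl(\log p_{\bm W}(\bm w) - \log p_{{\bm Z}_1}(\bm z_1)\bigr) d\bm w = \int p_{\bm W}(\bm w)\log\frac{p_{\bm W}(\bm w)}{p_{\widetilde{\bm W}}(\bm w)}d\bm w = D_{\mathrm{KL}}(p_{\bm W}\|p_{\widetilde{\bm W}}) \ge 0,$$
by Gibbs' inequality, which gives the left inequality for free. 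Thus the task reduces to constructing a single-hidden-layer network $\bm h$ whose induced pullback density $p_{\widetilde{\bm W}}$ is close to $p_{\bm W}$ in KL, and a companion network $\bm g$ satisfying $\bm g\circ \bm h\approx \mathrm{id}$ with compatible Jacobian determinants.

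For the construction I would invoke a universal approximation theorem in the $C^1$ norm on compact sets (e.g.~Hornik 1991). Given the diffeomorphism $\bm k$ guaranteed by Eq.~\eqref{eq_denp}, for any $\eta>0$ there exist single-hidden-layer networks with $\|\bm h - \bm k\|_{C^1(B_R)}<\eta$ on any ball $B_R\subset\mathbb{R}^{d+s}$, and likewise $\|\bm g-\bm k^{-1}\|_{C^1(\bm k(B_R))}<\eta$. Simultaneous $C^1$ closeness is essential because the Jacobian determinant enters both $p_{\widetilde{\bm W}}$ and $\widetilde{\mathcal{L}}_2$; uniform closeness alone would not suffice.

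To convert these approximations into the desired KL and $\widetilde{\mathcal{L}}_2$ bounds, I would split the domain into $B_R$ and its complement. Inside $B_R$, the lower bound $|\det \mathbf{J}_{\bm k}|>A_2$ and the derivative bound $|\partial k_i/\partial w_j|<A_1$ from Assumption~\ref{ass1}, combined with the Lipschitz continuity of $\log$ and $\det$ away from zero, turn the $C^1$-closeness of $\bm h$ to $\bm k$ into a uniform bound of order $\eta$ on $|\log(p_{\widetilde{\bm W}}/p_{\bm W})|$. The analogous argument for $\widetilde{\mathcal{L}}_2$ uses the triangle inequality
$$\|\bm g(\bm h(\bm w))-\bm w\|\le \|\bm g(\bm h(\bm w))-\bm k^{-1}(\bm h(\bm w))\|+\|\bm k^{-1}(\bm h(\bm w))-\bm k^{-1}(\bm k(\bm w))\|,$$
together with the chain-rule identity $\det\mathbf{J}_{\bm k^{-1}}(\bm k(\bm w))\det\mathbf{J}_{\bm k}(\bm w)=1$, which lets $|\det\mathbf{J}_{\bm g}(\bm h(\bm w))\det\mathbf{J}_{\bm h}(\bm w)-1|$ be expressed as a controllable perturbation of order $\eta$. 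Outside $B_R$, the Gaussian tail bound in Eq.~\eqref{assum_3} together with the sandwich bound in Eq.~\eqref{assum_4} from Assumption~\ref{ass2} ensures that the KL integrand is dominated by $C\|\bm w\|^2\exp(-\alpha\|\bm w\|^2)$ and the $\widetilde{\mathcal{L}}_2$ integrand by a similarly decaying quantity, so choosing $R$ sufficiently large makes the tail contribution smaller than $\varepsilon/2$. Choosing $\eta$ small enough after $R$ is fixed then bounds the interior contribution by $\varepsilon/2$, completing both estimates.

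The hardest step is the simultaneous $C^1$ approximation combined with a non-degeneracy constraint: the approximant $\bm h$ must retain $|\det\mathbf{J}_{\bm h}|$ bounded away from zero so that $\log|\det \mathbf{J}_{\bm h}|$ stays finite on $B_R$ and the sandwich Assumption~\ref{ass2} transfers from $\bm k$ to $\bm h$, not only to the ideal target. I would address this either by invoking a strengthened universal approximation statement that preserves non-vanishing of the Jacobian, or by perturbing a generic $C^1$ approximant (e.g.~via a small regularising composition) to guarantee $|\det\mathbf{J}_{\bm h}|>A_2/2$ on $B_R$; this is where I expect the bulk of the technical work to lie.
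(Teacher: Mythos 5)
The paper does not actually prove this lemma: it is imported verbatim as Theorems 4.6 and 4.7 of the companion work \cite{yang2023pseudo}, so there is no in-paper argument to compare against line by line. Your reconstruction follows what is essentially the standard (and, as far as the cited theorems go, the intended) route: the identity
$\widetilde{\mathcal{L}}_1 + \int p_{\bm W}(\log p_{\bm W} - \log p_{{\bm Z}_1})\,d\bm w = D_{\mathrm{KL}}(p_{\bm W}\,\|\,p_{\widetilde{\bm W}}) \ge 0$
is correct (the $\log p_{{\bm Z}_1}$ terms cancel exactly against Eq.~\eqref{con_l1}), and the reduction to $C^1$ universal approximation of the diffeomorphism $\bm k$ and $\bm k^{-1}$, followed by a ball/tail splitting in which the interior is controlled by the Jacobian bounds of Assumption~\ref{ass1} and the tail by the Gaussian decay in Eqs.~\eqref{assum_3} and \eqref{assum_4}, is the right skeleton. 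Your observation that uniform approximation alone is insufficient and that $C^1$ closeness is needed for the $\log|\det \mathbf{J}_{\bm h}|$ term is exactly the crux.

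Two points deserve sharper treatment than your sketch gives them. First, $p_{\widetilde{\bm W}}$ in Eq.~\eqref{p_xtilde} is defined through $\bm h^{-1}$, so you need $\bm h$ to be a \emph{global} diffeomorphism of $\mathbb{R}^{d+s}$, not merely to have $|\det\mathbf{J}_{\bm h}|$ bounded away from zero on $B_R$; a generic $C^1$ approximant of $\bm k$ on a compact set need not be injective off that set, and a single-hidden-layer network with bounded activation is not even surjective onto $\mathbb{R}^{d+s}$. This is a genuine gap that the framework papers over by assumption (Assumption~\ref{ass2} is stated directly for the network-induced density $p_{\widetilde{\bm W}}$, i.e., it is \emph{imposed on the constructed network} rather than derived from properties of $\bm k$), and your closing paragraph only partially acknowledges it. Second, for the tail estimate of $\widetilde{\mathcal{L}}_2$ you should note explicitly that a single-hidden-layer network with bounded activation has bounded output and bounded Jacobian, so $\|\bm g(\bm h(\bm w))-\bm w\|^2 \le C(1+\|\bm w\|^2)$ and the determinant product is uniformly bounded; this is what makes the Gaussian tail of $p_{\bm W}$ sufficient to kill the exterior integral. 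With those two caveats made precise, your argument matches the content of the cited theorems.
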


\subsection{Monte Carlo integration error of loss function}
In practice, we are not able to define the loss function in the continuous form since the neural network is data-based optimization solver. Hence, we employ the Monte Carlo method (discrete form) to approximate the loss functions $\widetilde{\mathcal{L}}_1$ and $\widetilde{\mathcal{L}}_2$. To proceed, we first introduce the following two lemmas.

\begin{lemma}\label{lem2}
Let $\bm f\in L^1(\mathbb{R}^d,\mathbb{R})$ and dataset $\{\bm x^{(i)}\}_{i=1}^n$ follow the probability density function $p_{\bm X}(\bm x)$. Then we have
\begin{equation}
\mathbb{E}\left|\int_{\mathbb{R}^d}\bm f(\bm x)p_{\bm X}(\bm x)d\bm x-\frac{1}{n}\sum_{i=1}^n \bm f(\bm x^{(i)})\right|^2=\frac{1}{n} {\rm Var}(\bm f({\bm X})).
\end{equation}
\end{lemma}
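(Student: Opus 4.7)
The plan is to recognize this as the standard unbiasedness-plus-variance computation for Monte Carlo integration, and reduce the squared-error expectation to the variance of the empirical mean. I would begin by denoting $\mu := \int_{\mathbb{R}^d} \bm f(\bm x) p_{\bm X}(\bm x)\, d\bm x = \mathbb{E}[\bm f(\bm X)]$ and $S_n := \frac{1}{n}\sum_{i=1}^n \bm f(\bm x^{(i)})$, so that the quantity on the left-hand side is $\mathbb{E}|\mu - S_n|^2$. Since $\{\bm x^{(i)}\}_{i=1}^n$ are drawn i.i.d. from $p_{\bm X}$, linearity of expectation gives $\mathbb{E}[S_n] = \mu$, which turns the squared error into the variance of the estimator: $\mathbb{E}|\mu - S_n|^2 = \mathrm{Var}(S_n)$.

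Next I would expand $\mathrm{Var}(S_n)$ using the bilinearity of covariance and the independence of the samples, i.e.,
\begin{equation*}
\mathrm{Var}(S_n) = \frac{1}{n^2}\sum_{i=1}^n \mathrm{Var}(\bm f(\bm x^{(i)})) + \frac{1}{n^2}\sum_{i\neq j}\mathrm{Cov}(\bm f(\bm x^{(i)}),\bm f(\bm x^{(j)})).
\end{equation*}
Independence kills the cross terms, and since each $\bm x^{(i)}$ is identically distributed as $\bm X$, each diagonal variance equals $\mathrm{Var}(\bm f(\bm X))$, yielding
\begin{equation*}
\mathrm{Var}(S_n) = \frac{1}{n^2}\cdot n \cdot \mathrm{Var}(\bm f(\bm X)) = \frac{1}{n}\mathrm{Var}(\bm f(\bm X)),
\end{equation*}
which is precisely the claimed identity.

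There is essentially no main obstacle here; the identity is a textbook computation. The only point to flag is a minor technicality: the statement assumes only $\bm f \in L^1(\mathbb{R}^d,\mathbb{R})$, but both sides of the claimed equality require $\mathbb{E}[\bm f(\bm X)^2] < \infty$ for $\mathrm{Var}(\bm f(\bm X))$ to be finite, so I would tacitly assume (or explicitly note) that $\bm f \in L^2(p_{\bm X})$. Under that integrability condition the argument reduces cleanly to the two displays above, and no deeper tools (such as concentration inequalities or dominated convergence) are needed.
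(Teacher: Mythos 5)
Your proof is correct and follows essentially the same route as the paper: both identify the left-hand side as the variance of the sample mean, expand it via bilinearity of covariance, and use independence of the samples to eliminate the cross terms, leaving $\frac{1}{n}{\rm Var}(\bm f(\bm X))$. Your side remark that the hypothesis $\bm f\in L^1$ should really be $\mathbb{E}[\bm f(\bm X)^2]<\infty$ for both sides to be finite is a valid technical point that the paper leaves implicit.
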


\begin{proof}
We define a family of independent and identically distributed (i.i.d.) variables $\{{\bm X}^{(i)}\}_{i=1}^{n}$ that following distribution $p_{{\bm X}^{(i)}}(\bm x^{(i)})$. Then we have
\begin{equation} 
\begin{aligned}  &\mathbb{E}\left|\int_{\mathbb{R}^d}\bm f(\bm x)p_{\bm X}(\bm x)d\bm x-\frac{1}{n}\sum_{i=1}^n \bm f(\bm x^{(i)})\right|^2\\
=&\mathbb{E}\left|\sum_{i=1}^n\frac{\mathbb{E}[\bm f({\bm X}^{(i)})]-\bm f({\bm X}^{(i)})}{n}\right|^2 \\
=&\frac{1}{n^2}\left[\sum_{i=1}^n {\rm Var}(\bm f({\bm X}^{(i)}))+\sum_{i\neq j=1}^n {\rm Cov}(\bm f({\bm X}^{(i)}),\bm f({\bm X}^{(j)}))\right]=\frac{{\rm Var}(\bm f({\bm X}))}{n}.
\end{aligned}
\end{equation}
% \begin{equation}
% \begin{aligned}  &\mathbb{E}\left|\int_{\mathbb{R}^d}f(x)p_X(x)dx-\frac{1}{n}\sum_{i=1}^n f(x_i)\right|^2\\
% =&\mathbb{E}\left|\sum_{i=1}^n\frac{\mathbb{E}[f(X_i)]-f(X_i)}{n}\right|^2\\
% =&\frac{1}{n^2}\left[\sum_{i=1}^nVar(f(X_i)+\sum_{i,j=1}^n Cov(f(X_i),f(X_j))\right]\\
% =&\frac{1}{n}Var(f(X))
%         \end{aligned}
% \end{equation}
We complete the proof.
\end{proof}

\begin{lemma}\label{lem3}
Let $\bm f \in L^1(\mathbb{R}^d, \mathbb{R}$) be a Lipschitz continuous function with constant $K$, i.e. $|\bm f({\bm x})-\bm f({\bm y})|\le K \|{\bm x}-{\bm y}\|_2$. For a random variable ${\bm X} = ({X}_1, {X}_2, \ldots, {X}_d)\in \mathbb{R}^d$, we have the following inequality
\begin{equation}
    {\rm Var} (\bm f({\bm X})) \leq K^2 \sum_{i=1}^d {\rm Var}({X}_i).
\end{equation}
\end{lemma}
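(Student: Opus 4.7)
The plan is to use the classical coupling identity for variance, namely that for any scalar random variable $Y$ with an independent copy $Y'$ one has $\mathrm{Var}(Y) = \tfrac{1}{2}\mathbb{E}[(Y-Y')^2]$, and then exploit the Lipschitz bound componentwise. Concretely, I would first introduce an independent copy $\bm X' = (X_1',\ldots,X_d')$ of $\bm X$, meaning the random vector $\bm X'$ has the same joint distribution as $\bm X$ and is independent of it. Setting $Y = \bm f(\bm X)$ and $Y' = \bm f(\bm X')$, the coupling identity yields
\begin{equation*}
\mathrm{Var}(\bm f(\bm X)) \;=\; \tfrac{1}{2}\, \mathbb{E}\bigl[(\bm f(\bm X) - \bm f(\bm X'))^2\bigr].
\end{equation*}

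Next I would apply the Lipschitz hypothesis $|\bm f(\bm x) - \bm f(\bm y)| \le K\|\bm x - \bm y\|_2$ pointwise inside the expectation, which gives
\begin{equation*}
(\bm f(\bm X) - \bm f(\bm X'))^2 \;\le\; K^2 \|\bm X - \bm X'\|_2^2 \;=\; K^2 \sum_{i=1}^d (X_i - X_i')^2.
\end{equation*}
Taking expectations and using linearity then reduces the problem to evaluating $\mathbb{E}[(X_i - X_i')^2]$ for each $i$. The key observation here is that although the components of $\bm X$ may be correlated with each other, for each fixed $i$ the variables $X_i$ and $X_i'$ are independent (because $\bm X'$ is an independent copy of $\bm X$) and share the same marginal distribution. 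Consequently $\mathbb{E}[(X_i - X_i')^2] = 2\,\mathrm{Var}(X_i)$.

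Combining these pieces gives $\mathrm{Var}(\bm f(\bm X)) \le \tfrac{K^2}{2} \sum_i 2\,\mathrm{Var}(X_i) = K^2 \sum_{i=1}^d \mathrm{Var}(X_i)$, which is the desired inequality. There is no real obstacle in this argument; the only point that warrants explicit care is the use of the coupling identity and, in particular, the distinction that $\bm X'$ is independent of $\bm X$ as a whole vector while the intra-vector correlations of $\bm X$ are preserved in $\bm X'$, so that the componentwise step producing $2\,\mathrm{Var}(X_i)$ is justified without requiring the coordinates $X_1,\ldots,X_d$ to be mutually independent. This makes the estimate usable in the subsequent Monte Carlo error bound of Theorem \ref{them1}, where the data samples may themselves be drawn from a joint distribution with dependent coordinates.
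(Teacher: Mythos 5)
Your proposal is correct and follows essentially the same route as the paper's proof: the paper also introduces an independent copy of $\bm X$, writes $2\,\mathrm{Var}(\bm f(\bm X)) = \mathbb{E}[|\bm f(\bm X)-\bm f(\bm X')|^2]$ (your coupling identity), applies the Lipschitz bound, and evaluates $\mathbb{E}[(X_i - X_i')^2] = 2\,\mathrm{Var}(X_i)$ coordinatewise. Your explicit remark that the coordinates of $\bm X$ need not be mutually independent is a nice clarification but does not change the argument.
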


\begin{proof}
Let ${\bm Y}$ be an independent and identically distributed (i.i.d.) as ${\bm X}$. Since $\bm f(\bm X)$ and $\bm f(\bm Y)$ are independent and have the same variance, we have
\begin{equation}
\begin{aligned}
2 {\rm Var}(\bm f(\bm X)) &= {\rm Var}(\bm f(\bm X)) + {\rm Var}(\bm f(\bm Y)) \\
&= {\rm Var}(\bm f(\bm X) - \bm f(\bm Y)) = \mathbb{E}\left[|\bm f(\bm X) - \bm f(\bm Y)|^2\right] \\
&\leq K^2 \mathbb{E}\left[\|\bm X-\bm Y\|_2^2\right] \quad \text{by Lipschitz continuity} \\
&= K^2 \sum_{i=1}^d\mathbb{E}\left[|X_i-Y_i|^2\right] = K^2 \sum_{i=1}^d\mathbb{E}\left[(X_i)^2 - 2X_iY_i + (Y_i)^2\right] \\
&= K^2 \sum_{i=1}^d \left(2\mathbb{E}[(X_i)^2] - 2(\mathbb{E}[X_i])^2\right) = 2K^2 \sum_{i=1}^d {\rm Var}(X_i).
\end{aligned}
\end{equation}
We complete the proof.
\end{proof}

Based on the results of Lemmas \ref{lem2} and \ref{lem3}, we have the error estimate of the loss functions $\widetilde{\mathcal{L}}_1$ and $\widetilde{\mathcal{L}}_2$.

\begin{theorem}\label{them1}
Loss functions $\mathcal{L}_1$ in Eq.~\eqref{eq:loglike} and $\mathcal{L}_2$ in Eq.~\eqref{eq:loss2} are the discrete form of $\widetilde{\mathcal{L}}_1$ and $\widetilde{\mathcal{L}}_2$, respectively. For the forward problem ${\bm Y} = {\bm f}({\bm X}) + {\bm \varepsilon}({\bm X})$, $\bm X\in \mathbb{R}^d$ and $\bm Y \in \mathbb{R}^s$ with $N$ training samples,
we have
\begin{equation}\label{l1-l1}
 \max{\{\mathbb{E}|\widetilde{\mathcal{L}}_1-\mathcal{L}_1|^2, \mathbb{E}|\widetilde{\mathcal{L}}_2-\mathcal{L}_2|^2 \}}   \le\frac{C}{N}\left[\sum_{i=1}^{s}\left[\mathbb{E}[v_{ii}(\bm X)]+ {\rm Var}(f_i(\bm X))\right]+\sum_{i=1}^{d}{\rm Var}(X_i)\right],
\end{equation}
where constant $C$ depends on the Lipschitz constant $K$ in Lemma \ref{lem3}, and ${\bm v}({\bm x}) = {\rm Var}[{\bm \varepsilon}({\bm x})]$.
\end{theorem}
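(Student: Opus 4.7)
The plan is to view each discrete loss as a Monte Carlo approximation of its continuous counterpart and then feed this observation through Lemmas~\ref{lem2} and \ref{lem3}. For the log-likelihood loss, I would introduce the integrand
\begin{equation*}
F_1(\bm w) := -\log p_{{\bm Z}_2}(\bm h_2(\bm w)) - \log|\det \mathbf{J}_{\bm h}(\bm w)|,
\end{equation*}
so that $\widetilde{\mathcal{L}}_1 = \int F_1(\bm w)\, p_{\bm W}(\bm w)\, d\bm w$ while $\mathcal{L}_1 = \frac{1}{N}\sum_{n=1}^N F_1(\bm w^{(n)})$, with the training samples $\bm w^{(n)}$ drawn from $p_{\bm W}$. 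Lemma~\ref{lem2} then gives $\mathbb{E}|\widetilde{\mathcal{L}}_1-\mathcal{L}_1|^2 = N^{-1}\mathrm{Var}(F_1(\bm W))$. An identical treatment for
\begin{equation*}
F_2(\bm w) := \|\bm g(\bm h(\bm w))-\bm w\|^2 + |\det \mathbf{J}_{\bm g}(\bm h(\bm w))\,\det \mathbf{J}_{\bm h}(\bm w)-1|
\end{equation*}
yields $\mathbb{E}|\widetilde{\mathcal{L}}_2-\mathcal{L}_2|^2 = N^{-1}\mathrm{Var}(F_2(\bm W))$.

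Next I would invoke Lemma~\ref{lem3}: since Assumption~\ref{ass1} bounds the partial derivatives and the determinants of the Jacobians of the transport maps from above and below, both $F_1$ and $F_2$ are Lipschitz continuous in $\bm w$ on the effective support, with some common constant $K$ (the one appearing in the statement). Consequently
\begin{equation*}
\max\{\mathrm{Var}(F_1(\bm W)),\mathrm{Var}(F_2(\bm W))\} \le K^2\sum_{i=1}^{d+s}\mathrm{Var}(W_i).
\end{equation*}

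The final step is to convert the componentwise variances of $\bm W=(\bm X,\bm Y)$ into the quantities advertised in the theorem. For the $\bm X$-components this is immediate: $\sum_{i=1}^d \mathrm{Var}(X_i)$. For each output coordinate $Y_i = f_i(\bm X)+\varepsilon_i(\bm X)$, the law of total variance (using $\mathbb{E}[\bm \varepsilon(\bm X)\mid \bm X]=0$ and $\mathbb{E}[\bm \varepsilon(\bm X)\bm \varepsilon(\bm X)^\top \mid \bm X]=\bm v(\bm X)$) gives
\begin{equation*}
\mathrm{Var}(Y_i) = \mathrm{Var}(\mathbb{E}[Y_i\mid \bm X]) + \mathbb{E}[\mathrm{Var}(Y_i\mid \bm X)] = \mathrm{Var}(f_i(\bm X)) + \mathbb{E}[v_{ii}(\bm X)].
\end{equation*}
Summing over $i=1,\ldots,s$ and combining with the $\bm X$-contribution, then absorbing $K^2$ into the constant $C$, yields the advertised bound.

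The main obstacle is Step~2, namely certifying that $F_1$ and $F_2$ are genuinely Lipschitz with a single constant $K$. The $\log|\det \mathbf{J}_{\bm h}|$ factor and the product-of-determinants term are the delicate pieces: they are smooth only because Assumption~\ref{ass1} bounds the Jacobian determinant away from zero (so $\log|\det|$ has bounded gradient) and because the partial derivatives of $\bm h,\bm g$ are uniformly bounded. One must also argue that the Gaussian tail bound on $p_{\bm W}$ from Assumption~\ref{ass1} controls the region where $F_1,F_2$ could grow, so that the Lipschitz constant is effectively global in the variance computation. Once this regularity is in place, the rest of the argument is routine application of the two lemmas.
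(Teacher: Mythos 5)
Your proposal is correct and follows essentially the same route as the paper: Lemma~\ref{lem2} for the Monte Carlo variance identity, Lemma~\ref{lem3} to reduce to the componentwise variances of $\bm W$, and then the identity $\mathrm{Var}(Y_i)=\mathbb{E}[v_{ii}(\bm X)]+\mathrm{Var}(f_i(\bm X))$, which the paper derives by explicitly computing $\mathbb{E}[Y_i^2]$ and $(\mathbb{E}[Y_i])^2$ via conditioning on $\bm X$ — exactly your law-of-total-variance step. The Lipschitz regularity of the integrands $F_1,F_2$ that you flag as the delicate point is in fact asserted without verification in the paper's proof as well, so your treatment is, if anything, more candid about that gap.
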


\begin{proof}
According to Lemma \ref{lem2} and Lemma \ref{lem3}, we have
\begin{equation}\label{I123}
    \begin{aligned}
        \mathbb{E}|\widetilde{\mathcal{L}}_1-\mathcal{L}_1|^2\le&\frac{C_1}{N}\sum_{i=1}^{d+s} {\rm Var}(W_i) =\frac{C_1}{N}\left[\sum_{i=1}^d {\rm Var}(X_i) + \sum_{i=1}^s {\rm Var}(Y_i)\right]\\
        =&\frac{C_1}{N}\left[\sum_{i=1}^d {\rm Var}(X_i) + \sum_{i=1}^s \mathbb{E}[Y_i^2]-\sum_{i=1}^s\mathbb{E}[Y_i]^2\right]:=\frac{C_1}{N}\left(I_1+I_2+I_3\right).
    \end{aligned}
\end{equation}

For the term $I_2$, we have
\begin{equation}\label{I2}
    \begin{aligned}        I_2=&\sum_{i=1}^s\int_{\mathcal{D}_{Y_i}}y_i^2P(Y_i=y_i)dy_i\\
=&\sum_{i=1}^s\int_{\mathcal{D}_{Y_i}}\int_{\mathcal{D}_{\bm X}}y_i^2P(Y_i=y_i|{\bm X}=\bm x)P({\bm X}=\bm x)d\bm x d y_i\\
=&\int_{\mathcal{D}_{\bm X}}P({\bm X}=\bm x)\int_{\mathcal{D}_{{Y_i}}}\sum_{i=1}^s y_i^2P(Y_i=y_i|{\bm X}=\bm x)d y_i d\bm x\\
=&\int_{\mathcal{D}_{\bm X}}P({\bm X}=\bm x)\sum_{i=1}^s[v_{ii}^2(\bm x)+ f_i^2(\bm x)]d\bm x  =\sum_{i=1}^s\mathbb{E}\left[v_{ii}(\bm X)+f_i^2(\bm X)\right].
    \end{aligned}
\end{equation}
where $\mathcal{D}_{\bm X}$ and $\mathcal{D}_{\bm Y}$ are domains of variables ${\bm X}$ and ${\bm Y}$. For the term $I_3$, we have
\begin{equation}\label{I3}
    \begin{aligned}        I_3=&\sum_{i=1}^s\left(\int_{\mathcal{D}_{Y_i}} y_iP(Y_i= y_i)dy_i\right)^2\\
=&\sum_{i=1}^s\left(\int_{\mathcal{D}_{Y_i}}\int_{\mathcal{D}_{\bm X}}y_iP(Y_i=y_i|\bm X=\bm x)P(\bm X=\bm x)d\bm x dy_i\right)^2\\
        =&\sum_{i=1}^s\left(\int_{\mathcal{D}_{\bm X}}P({\bm X}=\bm x)\int_{\mathcal{D}_{Y_i}}y_iP(Y_i=y_i|{\bm X}=\bm x)dy_i d\bm x\right)^2\\
        =&\sum_{i=1}^s\left(\int_{\mathcal{D}_{\bm X}}P({\bm X}=\bm x)f_i(\bm x)d\bm x\right)^2 = \sum_{i=1}^s\left(\mathbb{E}\left[ f_i({\bm X})\right]\right)^2.
    \end{aligned}
\end{equation}
Hence, the estimate for loss function $\mathcal{L}_1$ in Eq.~\eqref{l1-l1} is proved by combining inequalities  in Eqs.~\eqref{I123}, \eqref{I2} and \eqref{I3}. Similarly, we establish the estimates for  $\mathcal{L}_2$. We complete the proof.
\end{proof}

\subsection{Convergence of the KL divergence}
The section is to show that the convergence of $\widetilde{\mathcal{L}}_1$ and $\widetilde{\mathcal{L}}_2$ with certain assumptions implies the convergence of the KL-divergence between the conditional distribution $p_{\bm Y}$ and $p_{\widehat{\bm Y}}$.
\begin{assum}\label{ass3}
We assume that the target random variable $\bm W=(\bm X,\bm Y|\bm X)$ has the finite second-order moment, and there exists positive constant $A$ such that
\begin{equation}
    \nabla p_{\widehat {\bm W}}(\bm w)\le A(\|\bm w\|+1)p_{\widehat {\bm W}}(\bm w),
\end{equation}
where $p_{\widehat {\bm W}}(\bm w)$ is the distribution of the approximation ${\widehat {\bm W}}=(\bm X,{\widehat {\bm Y}}|\bm X)$.
\end{assum}

\begin{theorem}\label{them2}
Under the assumptions of Lemma \ref{lem1}, Theorem \ref{them1} and Assumption \ref{ass3}, for any given $\varepsilon > 0$, with sufficient training samples, there exist two neural networks $\bm h$ and $\bm g$ such that
    \begin{equation}
\mathbb{E}\left(\int_{\mathcal{D}_{\bm X}} D_{\rm KL}(p_{\bm Y|\bm X}\| p_{\widehat{\bm Y}|{\bm X}}) d\bm x \right)< \varepsilon.
    \end{equation}
\end{theorem}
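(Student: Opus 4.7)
The plan is to bound $\mathbb{E}[\int_{\mathcal{D}_{\bm X}} D_{\rm KL}(p_{\bm Y|\bm X}\|p_{\widehat{\bm Y}|\bm X})\,d\bm x]$ in three stages: first reduce it to a joint KL divergence between $p_{\bm W}$ and $p_{\widehat{\bm W}}$; then split this joint divergence into an \emph{encoder} piece bounded via Lemma \ref{lem1} and a \emph{decoder} piece bounded via $\widetilde{\mathcal{L}}_2$ and Assumption \ref{ass3}; and finally transfer these continuous-loss bounds back to the empirical regime using Theorem \ref{them1}.

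For the reduction, observe that because $\bm h_1$ and $\bm g_1$ are identity maps, the $\bm X$-marginal of $\widehat{\bm W}$ coincides with $p_{\bm X}$, so the chain rule of KL divergence yields
\[
D_{\rm KL}(p_{\bm W}\|p_{\widehat{\bm W}})=\int_{\mathcal{D}_{\bm X}} p_{\bm X}(\bm x)\,D_{\rm KL}(p_{\bm Y|\bm X}\|p_{\widehat{\bm Y}|\bm X})\,d\bm x.
\]
Since the prior $p_{\bm X}$ from Section \ref{sec:problem} is bounded below by some $\delta>0$ on $\mathcal{D}_{\bm X}$, the unweighted integral in the statement is at most $D_{\rm KL}(p_{\bm W}\|p_{\widehat{\bm W}})/\delta$, so it suffices to make $\mathbb{E}[D_{\rm KL}(p_{\bm W}\|p_{\widehat{\bm W}})]$ arbitrarily small. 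I then split
\[
D_{\rm KL}(p_{\bm W}\|p_{\widehat{\bm W}})=D_{\rm KL}(p_{\bm W}\|p_{\widetilde{\bm W}})+\int p_{\bm W}(\bm w)\log\frac{p_{\widetilde{\bm W}}(\bm w)}{p_{\widehat{\bm W}}(\bm w)}\,d\bm w=:T_1+T_2.
\]

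The term $T_1$ equals $\widetilde{\mathcal{L}}_1+\int p_{\bm W}(\log p_{\bm W}-\log p_{{\bm Z}_1})\,d\bm w$, so Lemma \ref{lem1} directly supplies a network pair making $T_1$ arbitrarily small. For $T_2$, substituting the change-of-variables formulas from Eq.~\eqref{p_xtilde} together with the identity $\det \bm J_{\bm g^{-1}}(\bm g(\bm u))=1/\det \bm J_{\bm g}(\bm u)$ rewrites the integrand as
\[
\log\frac{p_{\widetilde{\bm W}}(\bm w)}{p_{\widehat{\bm W}}(\bm w)}=\log\frac{p_{\widehat{\bm W}}(\bm g(\bm h(\bm w)))}{p_{\widehat{\bm W}}(\bm w)}+\log\bigl|\det \bm J_{\bm g}(\bm h(\bm w))\det \bm J_{\bm h}(\bm w)\bigr|.
\]
The second logarithm is controlled by the Jacobian-matching term of $\widetilde{\mathcal{L}}_2$ via $|\log s|\le 2|s-1|$ for $s$ close to $1$. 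For the first, Assumption \ref{ass3} implies $|\nabla\log p_{\widehat{\bm W}}(\bm w)|\le A(\|\bm w\|+1)$, so the mean value theorem applied to $\log p_{\widehat{\bm W}}$ along the segment from $\bm w$ to $\bm g(\bm h(\bm w))$, followed by Cauchy-Schwarz and the finite second-moment assumption on $\bm W$, gives $|T_2|\le C\sqrt{\widetilde{\mathcal{L}}_2}$, which tends to zero by the $\widetilde{\mathcal{L}}_2$ conclusion of Lemma \ref{lem1}.

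Finally, Theorem \ref{them1} shows $\mathbb{E}|\widetilde{\mathcal{L}}_i-\mathcal{L}_i|^2=\mathcal{O}(1/N)$ for $i=1,2$, so by choosing $N$ sufficiently large, the networks that approximately minimize the empirical loss $\mathcal{L}_1+\lambda\mathcal{L}_2$ also approximately minimize the continuous loss $\widetilde{\mathcal{L}}_1+\lambda\widetilde{\mathcal{L}}_2$ in expectation. Combining the three ingredients yields $\mathbb{E}[D_{\rm KL}(p_{\bm W}\|p_{\widehat{\bm W}})]<\varepsilon\delta$ and hence the claim. The main obstacle is the first summand of $T_2$: $\widetilde{\mathcal{L}}_2$ only controls the $L^2(p_{\bm W})$ closeness of $\bm g\circ\bm h$ to the identity, whereas one needs closeness of \emph{log}-densities. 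Assumption \ref{ass3} is designed precisely to bridge this gap, and the $\|\bm w\|+1$ factor in its gradient bound is why the finite second moment on $\bm W$ is essential.
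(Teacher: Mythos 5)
Your proposal is correct and follows essentially the same route as the paper: reduce the conditional KL to the joint KL $D_{\rm KL}(p_{\bm W}\|p_{\widehat{\bm W}})$ using the identity maps on the $\bm x$-component, bound that joint KL by $\widetilde{\mathcal{L}}_1$ plus an $A(\widetilde{\mathcal{L}}_2+\widetilde{\mathcal{L}}_2^{1/2})$ term, and then pass from the continuous to the empirical losses via the Monte Carlo estimate of Theorem \ref{them1}. The only difference is that your $T_1+T_2$ decomposition and the mean-value/Cauchy--Schwarz argument under Assumption \ref{ass3} explicitly reconstruct the inequality that the paper simply imports as Theorem 4.9 of \cite{yang2023pseudo}, so your write-up is, if anything, more self-contained (modulo the same looseness both share, e.g.\ applying $|\log s|\le 2|s-1|$ under only an integral control of $|s-1|$).
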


\begin{proof}
In the training process, we assume the input variable $\bm X$ follows a uniform distribution, i.e., $P_X({\bm x}) = \frac{1}{|\mathcal{D}_{\bm X}|}$ for all ${\bm x}\in\mathbb{R}^d$. By the definition of KL divergence we have
\begin{equation}\label{Them2_1}
\begin{aligned}
&\int_{\mathcal{D}_{\bm X}} D_{\rm KL}(p_{\bm Y|\bm X}\| p_{\widehat{\bm Y}|{\bm X}}) d\bm x\\
=&|\mathcal{D}_{\bm X}|\int_{\mathcal{D}_{\bm X}} P_{\bm X}(\bm x) \left(\int_{\mathcal{D}_{\bm Y}} P_{\bm Y|\bm X}(\bm y|\bm x)\log\frac{P_{\bm Y|\bm X}(\bm y|\bm x)}{P_{\widehat{\bm Y}|{\bm X}}(\bm y|\bm x)} d\bm y\right) d\bm x\\
=&|\mathcal{D}_{\bm X}|\int_{\mathcal{D}_{\bm X}} \left(\int_{\mathcal{D}_{\bm Y}}P_{\bm X}(\bm x)  P_{\bm Y|\bm X}(\bm y|\bm x)\log\frac{P_{\bm Y|\bm X}(\bm y|\bm x) P_{\bm X}(\bm x) }{P_{\widehat{\bm Y}|{\bm X}}(\bm y|\bm x) P_{{\bm X}}(\bm x)} d\bm y\right) d\bm x\\
=&|\mathcal{D}_{\bm X}|\int_{\mathcal{D}_{\bm W}}P_{\bm W}(\bm w)\log\frac{P_{\bm W}(\bm w)}{P_{\widehat {\bm W}}(\bm w)} d\bm w=|\mathcal{D}_{\bm X}|\int_{\mathcal{D}_{\bm W}} D_{\rm KL}(p_{\bm W}\|p_{\widehat {\bm W}}) d\bm w.
    \end{aligned}
\end{equation}
Following the Theorem 4.9 in \cite{yang2023pseudo}, we have 
\begin{equation}\label{Them2_2}
\int_{\mathcal{D}_{\bm W}} D_{\rm KL}(p_{\bm W}\|p_{\widehat {\bm W}}) d\bm w < \widetilde{\mathcal{L}}_1 + \int_{\mathbb{R}^d} p_{\bm W}(\bm w) (\log p_{\bm W}(\bm w)- \log p_{{\bm Z}_1}(\bm z_1)) d\bm w + A(\widetilde{\mathcal{L}}_2 + \widetilde{\mathcal{L}}_2^{\frac{1}{2}}),
\end{equation}
where $A$ is the constant in Assumption \ref{ass3}.

We choose $\varepsilon_1$ and $\varepsilon_2$ such that
$\varepsilon_1<\varepsilon/(2|\mathcal{D}_{\bm X}|)$ and $\varepsilon_2<\min\{\varepsilon^2/(16A^2|\mathcal{D}_{\bm X}|^2), 1\}$. Let $$\mathcal{L}_1 < -\int_{\mathbb{R}^d} p_{\bm W}(\bm w) (\log p_{\bm W}(\bm w)- \log p_{{\bm Z}_1}(\bm z_1)) d\bm w + \varepsilon_1/2,$$ and $\mathcal{L}_2 < \varepsilon_2/2$. By Theorem \ref{them1} we can choose an integer $n$ large enough such that $$\max{\{\mathbb{E}|\widetilde{\mathcal{L}}_1-\mathcal{L}_1|^2, \mathbb{E}|\widetilde{\mathcal{L}}_2-\mathcal{L}_2|^2 \}} < \min \{\varepsilon_1^2/4, \varepsilon_2^2/4\},$$ then we have

\begin{equation}\label{Them2_3}
    \begin{aligned}
        &\mathbb{E} \left|\widetilde{\mathcal{L}}_1 + \int_{\mathbb{R}^d} p_{\bm W}(\bm w) (\log p_{\bm W}(\bm w)- \log p_{{\bm Z}_1}(\bm z_1)) d\bm w\right|\\
        <& \mathbb{E} \left|\mathcal{L}_1 + \int_{\mathbb{R}^d} p_{\bm W}(\bm w) (\log p_{\bm W}(\bm w)- \log p_{{\bm Z}_1}(\bm z_1)) d\bm w\right| + \mathbb{E}|\widetilde{\mathcal{L}}_1-\mathcal{L}_1|\\
        <&\frac{\varepsilon_1}{2} + (\mathbb{E}|\widetilde{\mathcal{L}}_1-\mathcal{L}_1|^2)^\frac{1}{2} = \frac{\varepsilon_1}{2} + \sqrt{\frac{\varepsilon_1^2}{4}} = \varepsilon_1,
    \end{aligned}
\end{equation}
and 
\begin{equation}\label{Them2_4}
    \mathbb{E}(A(\widetilde{\mathcal{L}}_2 + \widetilde{\mathcal{L}}_2^{\frac{1}{2}})) 
    = A(\mathbb{E}(\widetilde{\mathcal{L}}_2) + \mathbb{E}(\widetilde{\mathcal{L}}_2^{\frac{1}{2}})) 
    \le A(\mathbb{E}(\widetilde{\mathcal{L}}_2) + (\mathbb{E}(\widetilde{\mathcal{L}}_2)^{\frac{1}{2}})
    < A(\varepsilon_2 + \varepsilon_2^{\frac{1}{2}}).
\end{equation}
The last inequality holds due to 
\begin{equation}\label{Them2_5}
    \mathbb{E}(\widetilde{\mathcal{L}}_2) 
    \le \mathbb{E}(\mathcal{L}_2) + \mathbb{E}|\widetilde{\mathcal{L}}_2-\mathcal{L}_2|
    \le \mathbb{E}(\mathcal{L}_2) + (\mathbb{E}|\widetilde{\mathcal{L}}_2-\mathcal{L}_2|^2)^\frac{1}{2}
    <\frac{\varepsilon_2}{2} + \frac{\varepsilon_2}{2} = \varepsilon_2.
\end{equation}
Thus, by \eqref{Them2_1}-\eqref{Them2_4} we have
\begin{equation}\label{Them2_6}
\begin{aligned}
\mathbb{E}\left(\int_{\mathcal{D}_{\bm X}} D_{\rm KL}(p_{\bm Y|\bm X}\| p_{\widehat{\bm Y}|{\bm X}}) d\bm x\right) = &|\mathcal{D}_{\bm X}|\mathbb{E}\left(\int_{\mathcal{D}_{\bm W}} D_{\rm KL}(p_{\bm W}\|p_{\widehat {\bm W}}) d\bm w\right)\\
<&|\mathcal{D}_{\bm X}|\left(\varepsilon_1 + A(\varepsilon_2 + \varepsilon_2^{\frac{1}{2}})\right) \le |\mathcal{D}_{\bm X}|\left(\varepsilon_1 + A(2\varepsilon_2^{\frac{1}{2}})\right)\\
<&|\mathcal{D}_{\bm X}|\left( \frac{\varepsilon}{2|\mathcal{D}_{\bm X}|} + 2A\left(\frac{\varepsilon^2}{16A^2|\mathcal{D}_{\bm X}|^2}\right)^\frac{1}{2}\right) = \varepsilon.
\end{aligned}
\end{equation}
We complete the proof.
\end{proof}

\section{Numerical examples}\label{sec:example}
In this section we present numerical experiments to demonstrate the performance of the proposed conditional PR-NF model. The synthetic example in Sec.~\ref{sec:ex1} benchmarks the accuracy of our model for problems with  known analytical ground-truth solutions. 
In Sec.~\ref{ex_high} we present the implementation of the PR-NF model on high-dimensional uncertainty problems. Sec.~\ref{gcs} presents an application to a real-world geologic carbon storage
problem. In all numerical simulations, the PyTorch machine learning framework has been implement with CUDA GPU. The source code is publicly available at \url{https://github.com/mlmathphy/PRNF_uncertainty}. The numerical results presented in this section can be accurately replicated by utilizing the code available on our GitHub repository.

\subsection{Verification of algorithm accuracy}\label{sec:ex1}
We consider the following uncertainty propagation problem
\begin{equation}
    { y} = f({ x}) + \varepsilon({ x}),  \,{ x} \in \mathcal{D} = [0,1],
\end{equation}
where the deterministic function $f({ x})$  is considered  as two expressions:
\begin{equation}\label{eq_ex1_f}
     {\bf Quadratic:}\, f({ x}) = 4({ x}-0.5)^2 \text{ or }\, {\bf Sin:} \, f({ x}) = \sin{(2\pi { x})}.
\end{equation}
The uncertainty $\varepsilon({ x})$ has four options (homoscedastic and heteroscedastic):
\begin{equation}\label{eq_ex1_e}
\begin{aligned}
& {\bf Gaussian: } \, \varepsilon({ x}) \sim \mathcal{N}(0,0.15) \text{ or }\, \varepsilon({ x}) \sim \mathcal{N}(0,0.2 |f({ x})|).\\
& {\bf Laplace: } \, \varepsilon({ x}) \sim {\rm Laplace}(0,0.1) \text{ or }\, \varepsilon({ x}) \sim {\rm Laplace}(0,0.15|f({ x})|).
\end{aligned}
\end{equation}
The training dataset is based on 20K samples of  variable $\{{x}^{(i)}\}_{i=1}^{N_{\rm train}}$ uniformly generated in $\mathcal{D}$. For each ${x}^{(i)}$, $i = 1,\ldots, N_{\rm train}$, we generated the corresponding observation ${y}^{(i)}$ using the equation ${y}^{(i)} = f({x}^{(i)}) + \varepsilon({x}^{(i)})$. The pair $\{{x}^{(i)},{y}^{(i)}\}$, $i = 1,\ldots, N_{\rm train}$, consists of the training dataset. 

In the context of the forward problem, ${x}$ serves as the input, and ${y}$ represents the output. Conversely, in the inverse problem, the roles of ${x}$ and ${y}$ are reversed. We utilize the same training dataset $\{x^{(i)},y^{(i)}\}_{i=1}^{N_{\rm train}}$ and build two independent single hidden layer neural networks for forward and inverse problems, respectively.
In the training process, we choose hyperparameter $\lambda$, which controls the relative importance of the pseudo-reversivility and the
negative log-likelihood losses, such that the cross entropy $H(\lambda)$ in Eq.~\eqref{eq:cross} has a minimum. In both forward and inverse problems we choose $\lambda = 80$. The single hidden layer neural network has 256 neurons and uses Tanh activation function. The neural networks are trained by the Adam optimizer with 2000 epochs.

\subsubsection{The PR-NF model is capable to sample and evaluate the conditional distribution $p({ y} | { x})$}
We use the forward uncertainty propagation (${x}$ serves as the input, and ${y}$ represents the output) as an example.
This section is to demonstrate that the well-trained PR-NF model is capable to be served as a surrogate model to 
\begin{itemize}
    \item generate sufficient samples of observation variable $y$ for any input variable $x \in \mathcal{D}$;
    \item evaluate the conditional distribution $p({ y} | { x})$ based on samples of $y$.
\end{itemize}
The Kullback–Leibler (KL) divergence is applied as a metric to test the accuracy of models.

Fig.~\ref{fig_test1_kl} shows the Kullback–Leibler  divergence between the ground-truth $p({y} | {x})$ and the approximation $p(\widehat{ y} | {x})$ at $x\in (-1,2)$ formulated as 
\begin{equation}\label{eqkl_1}
 D_{\rm KL}(x)= \int_{\mathbb{R}^{s}} p({y}|{x})\log{\left(\frac{p({y}|{x})}{{p}(\widehat{y}|{x})}\right)}d {y},
\end{equation}
where the density $p(\widehat{ y} | {x})$ is calculated based on 20K samples $\{\widehat{y}^{(i)}\}_{i=1}^{N_{\rm sample}}$  generated from the well-trained PR-NF model and the KL-divergence is numerically approximated by Riemann sum over a uniform mesh. 
It is shown that the PR-NF model achieves a good agreement in KL-divergence when the input $x$ is in the training dataset domain, i.e., $x\in \mathcal{D}$. However, the KL-divergence gets worse when $x\not\in \mathcal{D}$, which means the PR-NF model does
not have the prediction property for adapting beyond its initial training domain.

\begin{figure}[h!]
    \centering
  {\includegraphics[width=0.95\textwidth]{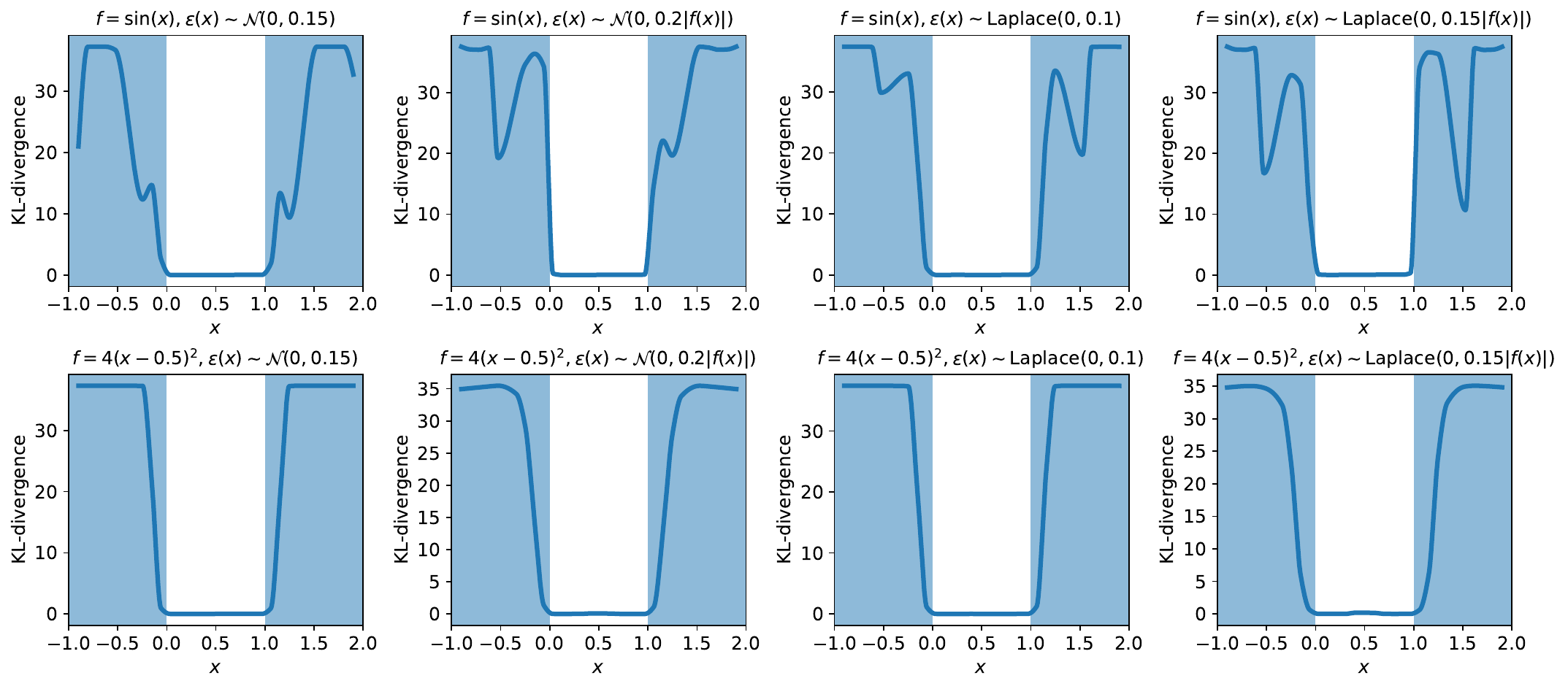}}
  \vspace{-0.2cm}
    \caption{The Kullback–Leibler (KL) divergence between the ground-truth density $p({ y} | { x})$ and the approximation $p(\widehat{ y} | {x})$ from the PR-NF model for any $x\in (-1,2)$. The top row is for the case $f({ x}) = \sin{(2\pi { x})}$ with four different additive noises and the bottom row corresponds to $f({ x}) = 4({ x}-0.5)^2$. The figure shows that the PR-NF model performs well in terms of KL-divergence within its training range, specifically for $x\in (0,1)$. It is noted that, when input $x$ falls outside this domain, the KL-divergence does increase, which means the PR-NF model does not have the prediction property for  adapting beyond its training domain.}
    \label{fig_test1_kl}
\end{figure}

Building upon the conclusion from Fig.~\ref{fig_test1_kl}, we investigate further into the performance of the PR-NF model regarding the evaluation of the density function $p({y} | {x})$ at specific values of $x$. Figs.~\ref{fig_test1_sin} and \ref{fig_test1_square} show the case of $f({ x}) = \sin{(2\pi { x})}$ and $f({ x}) = 4({ x}-0.5)^2$, respectively. Each row represents different additive noise and each column corresponds to different test point $x$. In alignment with the findings depicted in Fig.~\ref{fig_test1_kl}, we select four test points, $x = -0.8, 0.2, 0.8, 1.8$, with two points falling outside the domain $\mathcal{D}$ and two points inside. In each plot, the orange curve is the ground-truth $p({y} | {x})$ and the blue histogram is the approximation $p(\widehat{ y} | {x})$ generated by the PR-NF model with $N_{\rm sample} = 20$K evaluation samples. Very good agreements are observed for inside test point $x\in \mathcal{D}$ (2nd \& 3rd columns) and the PR-NF model captures different types of uncertainties including the heteroscedastic noises, but it does not work for test points outside of domain $\mathcal{D}$ (1st \& 4th columns). The center of distribution function $p({y} | {x})$ locates at $f(x)$ and the shape of distribution is determined by the noise $\varepsilon(x)$. Here we highlight that the well-trained PR-NF model can evaluate $p(y|x)$ at any $x\in \mathcal{D}$ without any additional training processes.

\begin{figure}[h!]
    \centering
  {\includegraphics[width=0.95\textwidth]{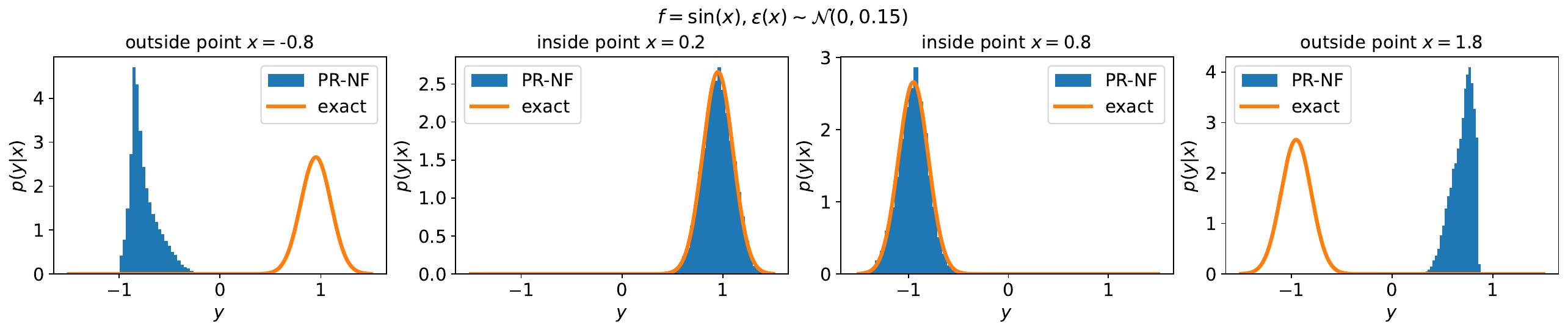}}
  {\includegraphics[width=0.95\textwidth]{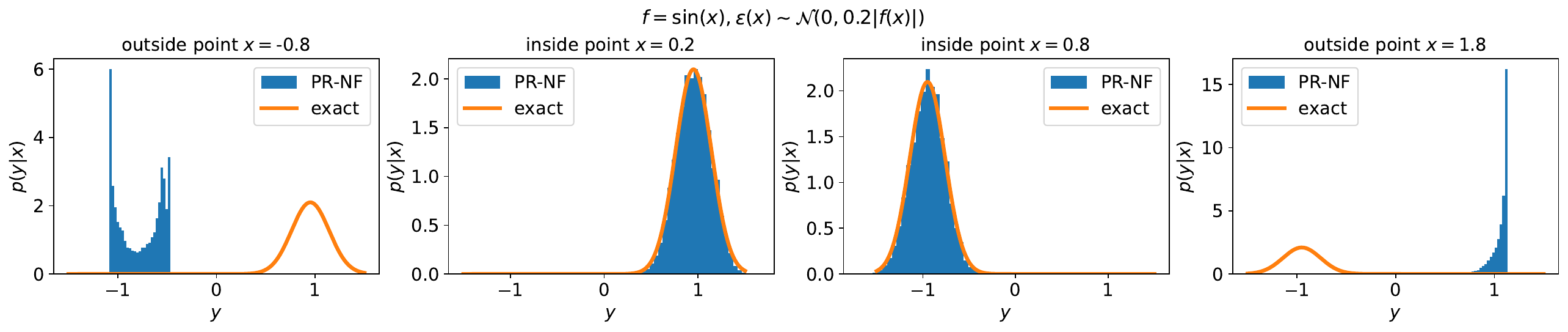}}
  {\includegraphics[width=0.95\textwidth]{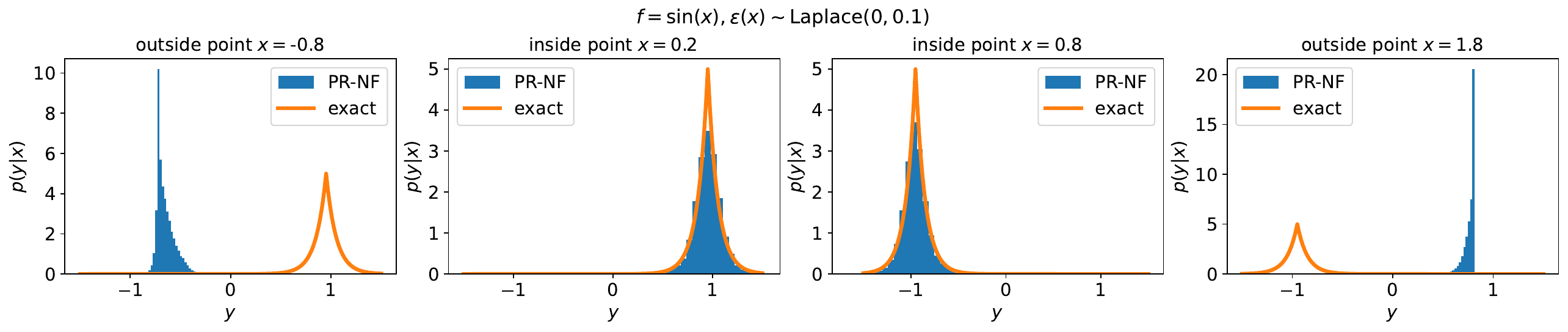}}
{\includegraphics[width=0.95\textwidth]{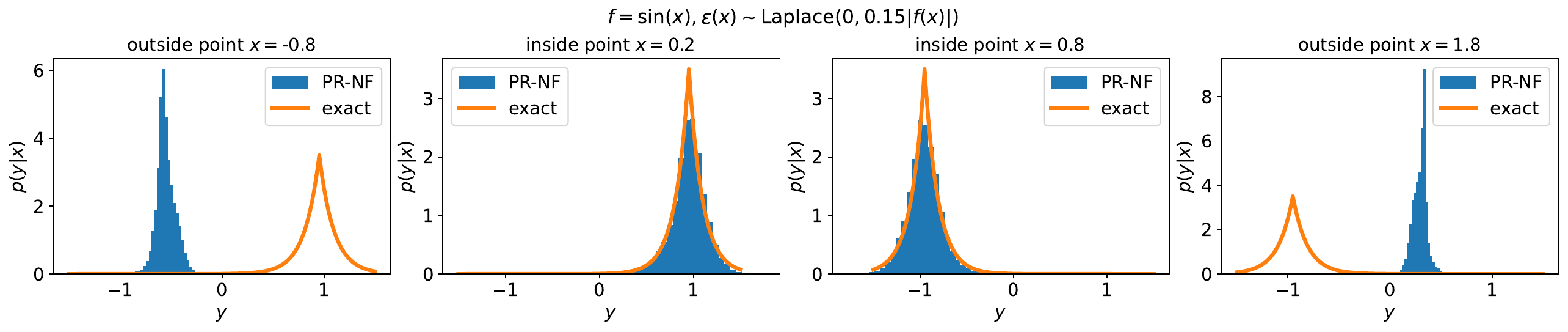}}
\vspace{-0.25cm}
\caption{The accuracy performance of the well-trained PR-NF model on evaluating $f(x) = \sin{2\pi x}$ with four different additive noises. Each row represents different noise and each column corresponds to different point $x = -0.8, 0.2, 0.8, 1.8$ (left to right). Consistent with the conclusion from Fig.~\ref{fig_test1_kl}, very good agreements are observed for
inside point $x \in D$ (2nd \& 3rd columns). However, it does not work for  points outside of domain $D$ (1st \& 4th columns), which means  the PR-NF model does not have the prediction property for adapting beyond its initial training domain.}
    \label{fig_test1_sin}
\end{figure}

\begin{figure}[h!]
    \centering
  {\includegraphics[width=0.95\textwidth]{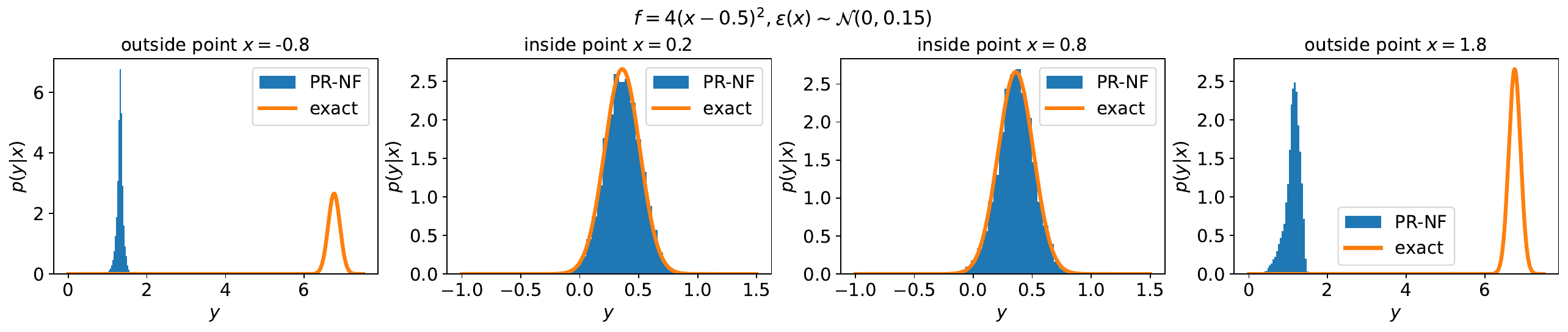}}
  {\includegraphics[width=0.95\textwidth]{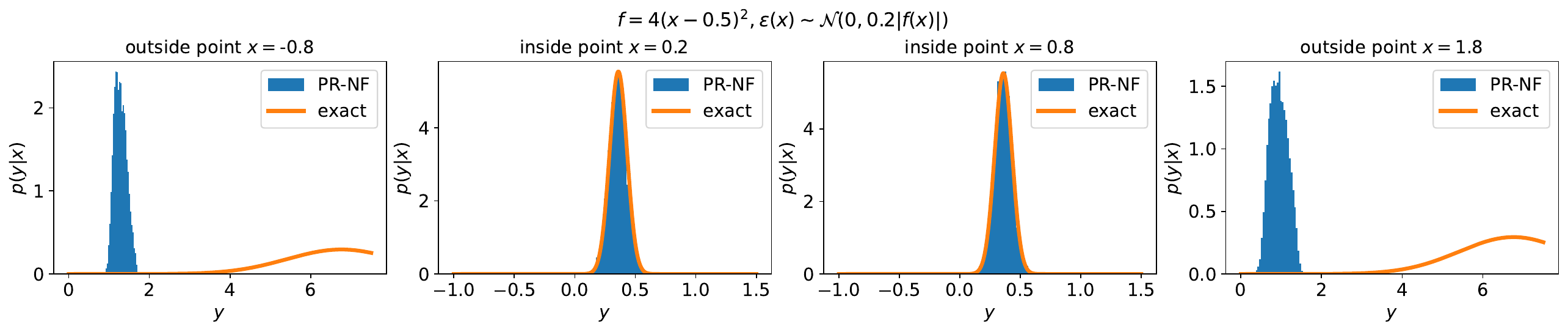}}
  {\includegraphics[width=0.95\textwidth]{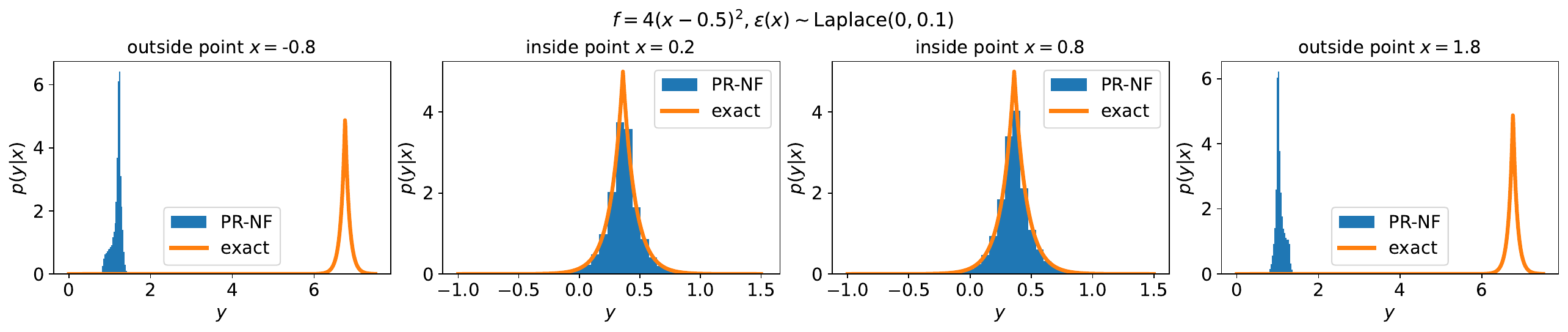}}
    {\includegraphics[width=0.95\textwidth]{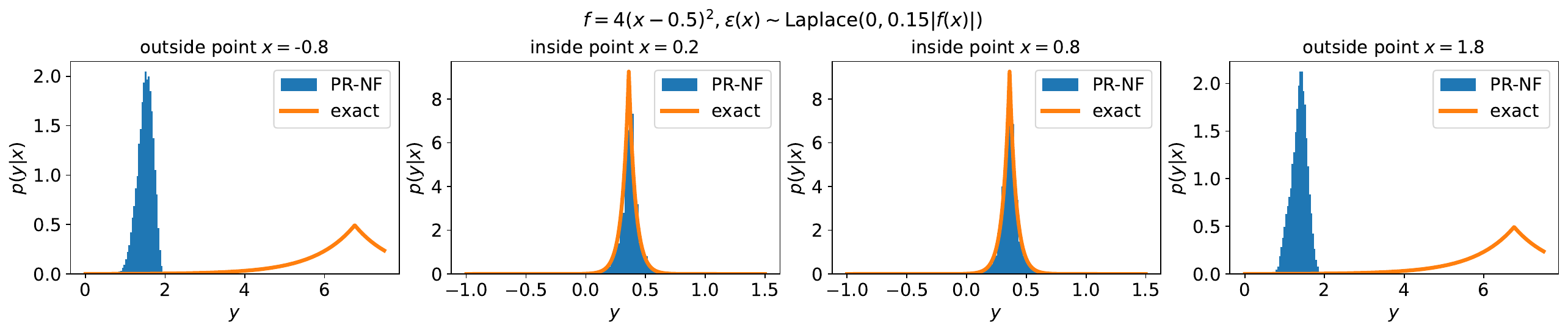}}
    \caption{
    % Same as Fig \ref{fig_test1_sin}, but this figure shows the accuracy performance of the well-trained PR-NF model on evaluating $f(x) = 4(x-0.5)^2$.
 The accuracy performance of the well-trained PR-NF model on evaluating $f(x) = 4(x-0.5)^2$ with four different additive noises. Each row represents different noise and each column corresponds to different point $x = -0.8, 0.2, 0.8, 1.8$ (left to right). Consistent with the conclusion from Fig.~\ref{fig_test1_kl}, very good agreements are observed for
inside point $x \in D$ (2nd \& 3rd columns).
    }
    \label{fig_test1_square}
\end{figure}

\subsubsection{The PR-NF model can evaluate the bimodal conditional distribution $p({x} | {y})$}
Fig.~\ref{fig_test1_inv} illustrates the training data of two physical models: $f(x) = \sin{2\pi x}$ (left panel) and $f(x) = 4(x-0.5)^2$ (right panel), each subject to uncertainty $\varepsilon \sim \mathcal{N}(0,0.15)$. It is observed that for each observation $y$, the corresponding parameter variable $x$ exhibits a probability distribution characterized by two distinct peaks (bimodal distribution). Due to the lack of a one-to-one relationship from variable $y$ to $x$, we are not able to derive a function of $x$ in terms of $y$. Traditional neural network models often struggle to effectively capture the process from $y$ to $x$ since mean squared error(MSE) fails to capture two peaks.

\begin{figure}[h!]
    \centering
  {\includegraphics[scale=0.25]{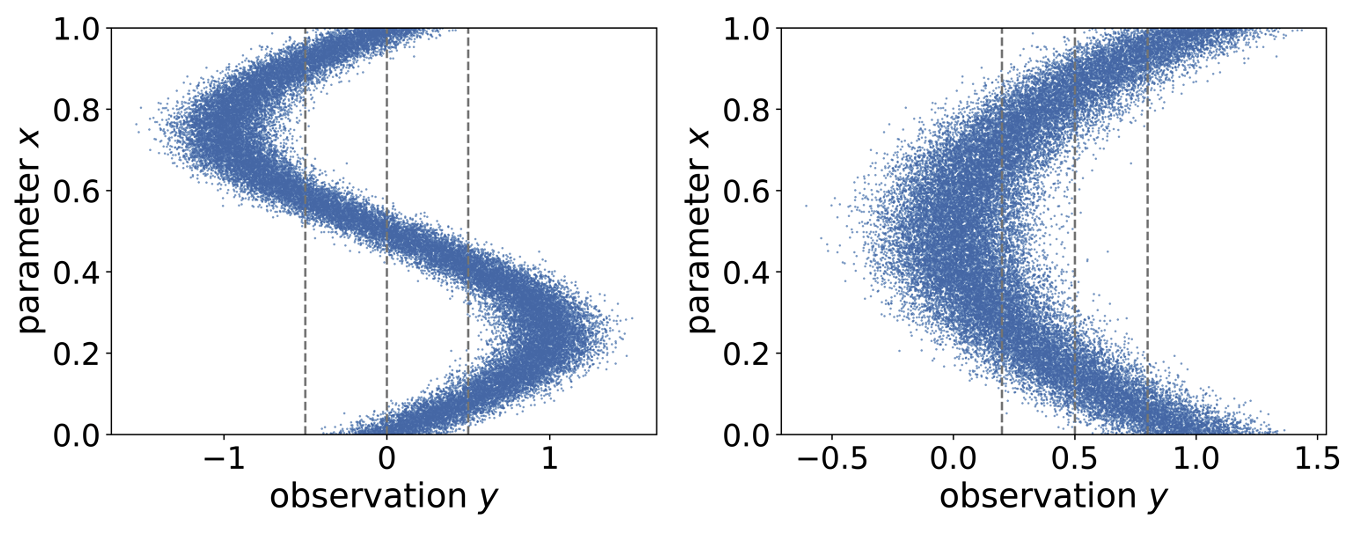}}
  \vspace{-0.2cm}
    \caption{The training data $\{x^{(i)},y^{(i)}\}_{i=1}^{N_{\rm train}}$ of  functions $f(x) = \sin{(2\pi x)}$ (left panel) and $f(x) = 4(x-0.5)^2$ (right panel), each subject to $\varepsilon \sim \mathcal{N}(0,0.15)$. Dashed lines are three test points used in Figs.~\ref{inv_sin_gaussian_homo} and \ref{inv_square_gaussian_homo}. }
    \label{fig_test1_inv}
    \vspace{-0.2cm}
\end{figure}

Fig.~\ref{inv_sin_gaussian_homo} shows the conditional distribution $p(x|y)$ constructed by the well-trained PR-NF model at three test points $y = -0.5,0,0.5$, where $f(x) = \sin{2\pi x}$ and $\varepsilon \sim \mathcal{N}(0,0.15)$. Even though given the explicit expression of function $f$ and uncertainty $\varepsilon$, we are not able to derive the analytical expression of $p(x|y)$. The ground truth PDF is constructed using Gaussian kernel density estimation with sufficient samples. Very good agreements on the distribution are observed. 
Similarly, Fig.~\ref{inv_square_gaussian_homo} shows the conditional distribution $p(x|y)$ at test points $y = 0.25,0.5,0.75$, where $f(x) = 4(x-0.5)^2$ and $\varepsilon \sim \mathcal{N}(0,0.15)$.
It is observed that the PR-NF model is not sensitive to the input-output function relationship and has the capacity to capture the bimodal distribution. It utilize the identical training data and similar neural network configuration (with a simple input and output position switch) for both forward and inverse uncertainty problems.

\begin{figure}[h!]
    \centering
  {\includegraphics[scale=0.4]{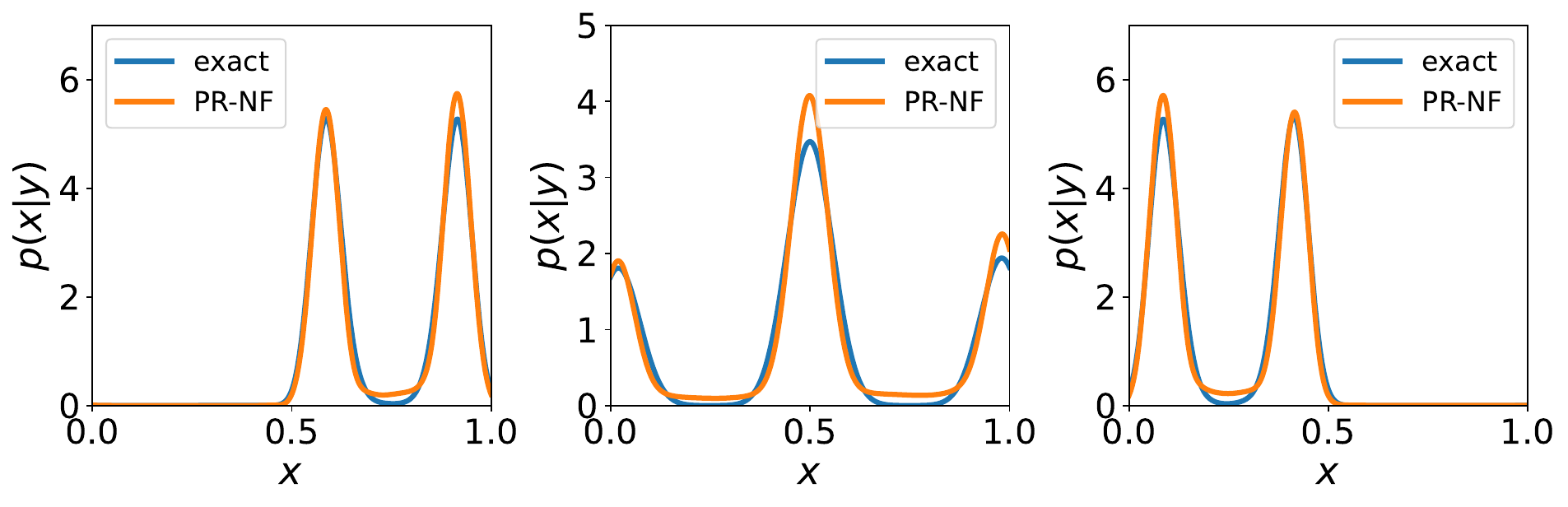}}
  \vspace{-0.3cm}
    \caption{The PR-NF model evaluates the conditional distribution $p(x|y)$ at three  points $y = -0.5,0,0.5$, where $f(x) = \sin{(2\pi x)}$ and $\varepsilon \sim \mathcal{N}(0,0.15)$. The well-trained PR-NF model can capture the probability distribution characterized by
two distinct peaks (bimodal distribution) and quantify the inverse uncertainty propagation well.}
    \label{inv_sin_gaussian_homo}
\end{figure}

\begin{figure}[h!]
    \centering
  {\includegraphics[scale=0.4]{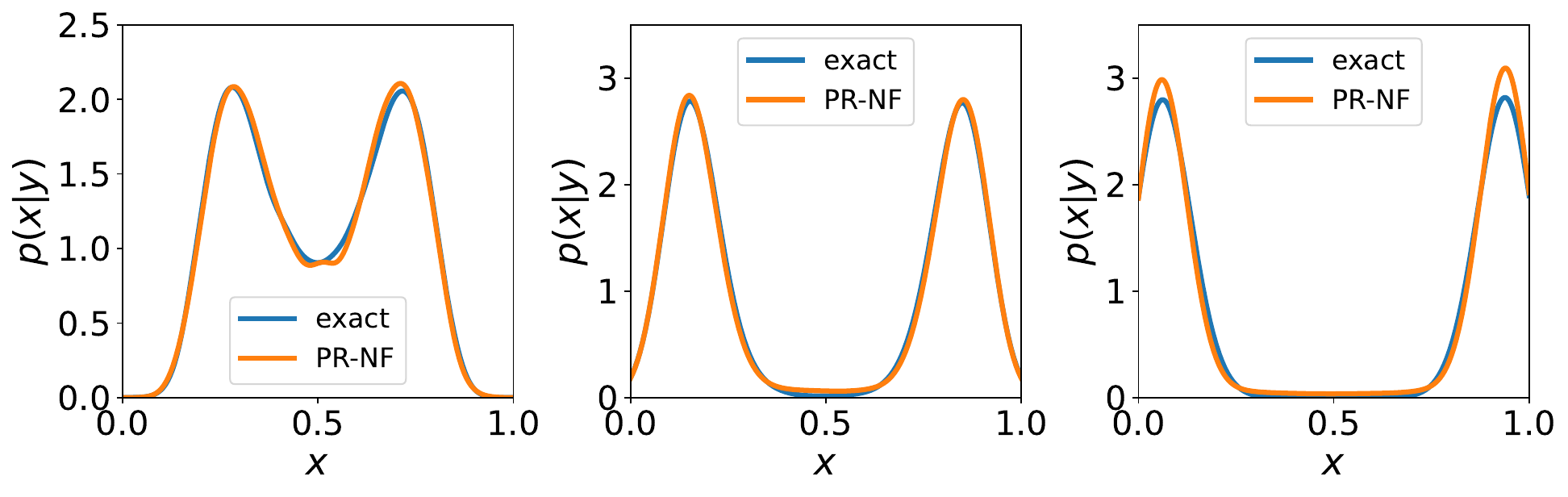}}
  \vspace{-0.3cm}
    \caption{
    % Same as Fig.~\ref{inv_sin_gaussian_homo}, but the PR-NF model evaluates the conditional distribution $p(x|y)$ at test points $y = 0.25,0.5,0.75$, where $f(x) = 4(x-0.5)^2$ and $\varepsilon \sim \mathcal{N}(0,0.15)$.
    The PR-NF model evaluates the conditional distribution $p(x|y)$ at three  points $y = 0.25,0.5,0.75$, where $f(x) = 4(x-0.5)^2$ and $\varepsilon \sim \mathcal{N}(0,0.15)$. The well-trained PR-NF model can capture the probability distribution characterized by
two distinct peaks (bimodal distribution) and quantify the inverse uncertainty propagation well.}
    \label{inv_square_gaussian_homo}
\end{figure}

\subsection{High-dimensional regression problem}\label{ex_high}
This numerical example aims to illustrate how the PR-NF model effectively quantifies uncertainty in high-dimensional problems. We consider a physical model that has the parameter variable ${\bm x}\in \mathcal{D}=[0,1]^{d}$ with $d = 20$ and the observation variable ${\bm y}\in \mathbb{R}^{s}$ with $s=5, 10, 20$, as follows
\begin{equation}
{\bm y} = {\bm f}({\bm x}) + {\bm \varepsilon}({\bm x}),
\end{equation}
where the deterministic function $ {\bm f}({\bm x}) = {\bf A} {\bm x}$ and values of the coefficient matrix ${\bf A}\in \mathbb{R}^{s\times d}$ are selected from the uniform distribution $\mathcal{U}[0,1]$. The additive noise ${\bm \varepsilon}$ has three options:
\begin{itemize}
    \item A multivariate normal distribution without correlation i.e., ${\bm \varepsilon}({\bm x}) \sim \mathcal{N}(0, 0.1\cdot \mathbf{I}_{s})$.
    \item A multivariate Gaussian mixture distribution that consists of two same weight Gaussian distribution components, ${\bm \varepsilon}_1({\bm x}) \sim \mathcal{N}(0.1, 0.1\cdot \mathbf{I}_{s})$ and ${\bm \varepsilon}_2({\bm x}) \sim \mathcal{N}(-0.1, 0.1\cdot \mathbf{I}_{s})$.
    \item A multivariate normal distribution with correlation, i.e., ${\bm \varepsilon}({\bm x}) \sim \mathcal{N}(0, \Sigma_{s\times s})$, where the covariance matrix $\Sigma_{s \times s}$ is a symmetric positive definite matrix and its values are randomly selected from $\mathcal{N}(0,1)$.
\end{itemize}
In this problem the training dataset $\mathcal{V}$ consists of $N_{\rm train}=30$K samples. 
The hyperparameter $\lambda$ is chosen as $\lambda = 80$ for which the cross entropy, $H(\lambda)$, has a minimum.
We use a fully-connected neural network with one hidden layer to build a surrogate model for the conditional distribution $P(\bm y | \bm x)$.
We evaluate the PR-NF model by the following metrics:
\vspace{0.2cm}
\begin{itemize}[leftmargin=15pt]
    \item Error of the normalized mean value:
    \begin{equation}
    \label{eq_avgmean}
        {\rm Err_{mean}} = \frac{1}{N_{\rm test}}\sum_{i=1}^{N_{\rm test}} \frac{\|{\bm f}({\bm x}^{(i)})-\widehat{\bm f}({\bm x}^{(i)})\|_2}{\|{\bm f}({\bm x}^{(i)})\|_2},
    \end{equation}
    where ${\bm f}({\bm x}^{(i)})$ is computed as the mean of ${\bm y}({\bm x}^{(i)})$. 
    \item Error of the standard deviation, and covariance matrix that for the third additive noise (the correlated normal distribution):
    {\footnotesize	
    \begin{equation} 
        {\rm Err_{std}} = \frac{1}{N_{\rm test}}\sum_{i=1}^{N_{\rm test}} \frac{\|{\bm \sigma}({\bm y}({\bm x}^{(i)}))-{\bm \sigma}(\widehat{\bm y}({\bm x}^{(i)}))\|_2}{\sqrt{s}},\quad         {\rm Err_{cov}} = \frac{1}{N_{\rm test}}\sum_{i=1}^{N_{\rm test}} \frac{\|{\Sigma}({\bm y}({\bm x}^{(i)}))-{\bm \Sigma}(\widehat{\bm y}({\bm x}^{(i)}))\|_F}{s}.
    \end{equation}
    }
    \item Average of the KL divergence:
    \begin{equation}
    \label{eq_avgKL}
         {\rm Avg_{KL}} = \frac{1}{N_{\rm test}}\sum_{i=1}^{N_{\rm test}} D_{\rm KL}(p({\bm x}^{(i)}) \,\| \,\widehat{p}({\bm x}^{(i)})),
    \end{equation}
    where the KL divergence is defined as the relative entropy from the approximate density (generated from PR-NF) $p_{\rm approx}$ to the exact density $p_{\rm exact}$, i.e.,
\begin{equation}\label{eq_kl}
 D_{\rm KL}(p({\bm x}^{(i)}) \,\| \,\widehat{p}({\bm x}^{(i)}))= \int_{\mathbb{R}^{s}} p({\bm y}|{\bm x}^{(i)})\log{\left(\frac{p({\bm y}|{\bm x}^{(i)})}{{p}(\widehat{\bm y}|{\bm x}^{(i)})}\right)}d {\bm y},
\end{equation}
where $D_{\rm KL}$ is approximated by the Riemann sum over a uniform mesh.
\end{itemize}
\vspace{0.2cm}
These average metrics are averaged among $N_{\rm test}=100$ test points, which are randomly sampled in domain $\mathcal{D}$. For each test point ${\bm x}^{(i)}$, we generate $N_{\rm sample}=20$K samples of ${\bm y}({\bm x}^{(i)})$ for the evaluation. $\|\cdot \|_2$ and $\|\cdot \|_F$ are $L^2$ and Frobenius norm among dimensionality, respectively. 

To assess the robustness of the PR-NF model, we conducted ten (without repeats) random simulations, each varying the hyperparameter $\lambda$ and the number of neurons $N_{\rm neuron}$  from the set $$\{(\lambda,N_{\rm neuron}) | \lambda \in \{1, 50, 100, 200\}, N_{\rm neuron} \in \{400, 600, 800, 1000\}\}.$$
As the PR-NF employs a single hidden layer neural network, the number of hidden layers is not a hyperparameter and remains fixed at one. Figures \ref{fig_dim5}, \ref{fig_dim10}, and \ref{fig_dim20} depict the decay of training loss alongside the above defined average metrics of testing dataset, each presented with a 95\% confidence interval. Each figure represents a different dimensional case ($s=5, 10, 20$), with each row corresponding to different types of additive noise.
The decay trends observed in the figures exhibit strong performance, particularly noteworthy is stable in error decay following a short transition period, evident by the narrow confidence intervals. These numerical findings underscore the capability of the proposed PR-NF model in addressing various high-dimensional uncertainty challenges. The PR-NF model consistently converges towards the minimum loss within a reasonable parameter range, demonstrating its robustness and stability. Regarding computational efficiency, with the GPU-acceleration, even for the case $s=20$ the training process takes about 10 minutes and the evaluation is completed in seconds.

\begin{figure}[h!]
    \centering
  {\includegraphics[width=0.95\textwidth]{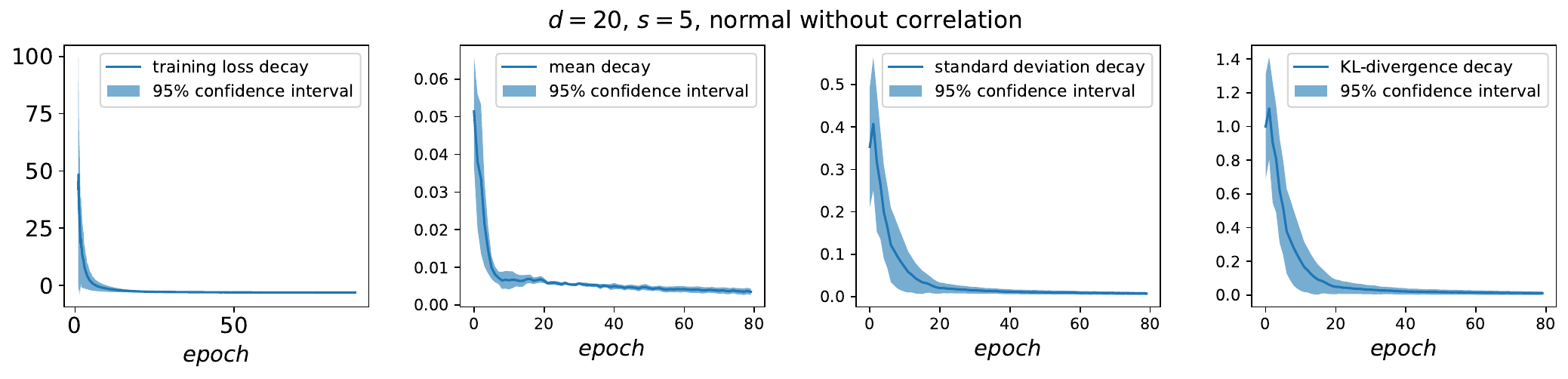}}
  {\includegraphics[width=0.95\textwidth]{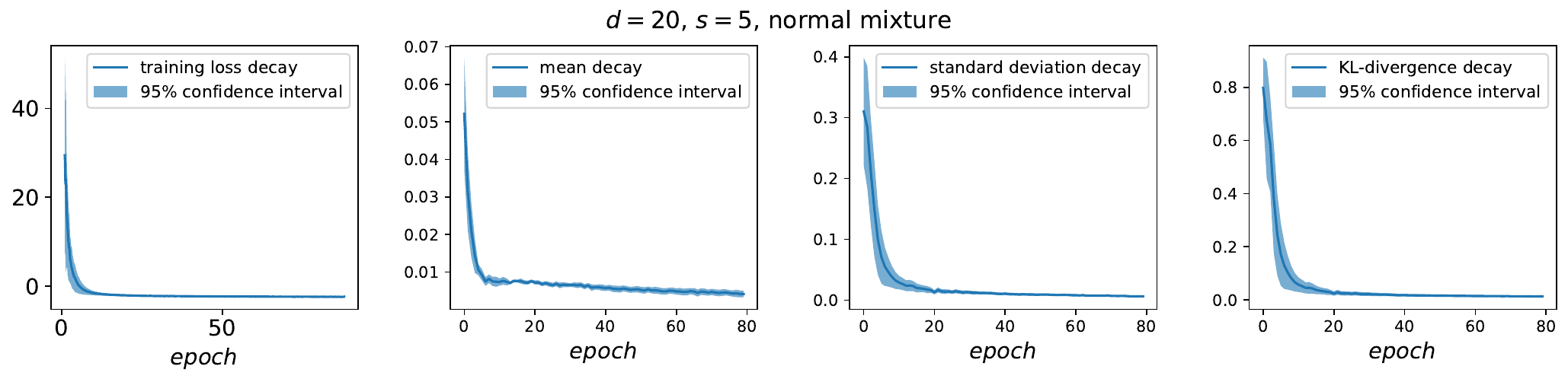}}
    {\includegraphics[width=0.95\textwidth]{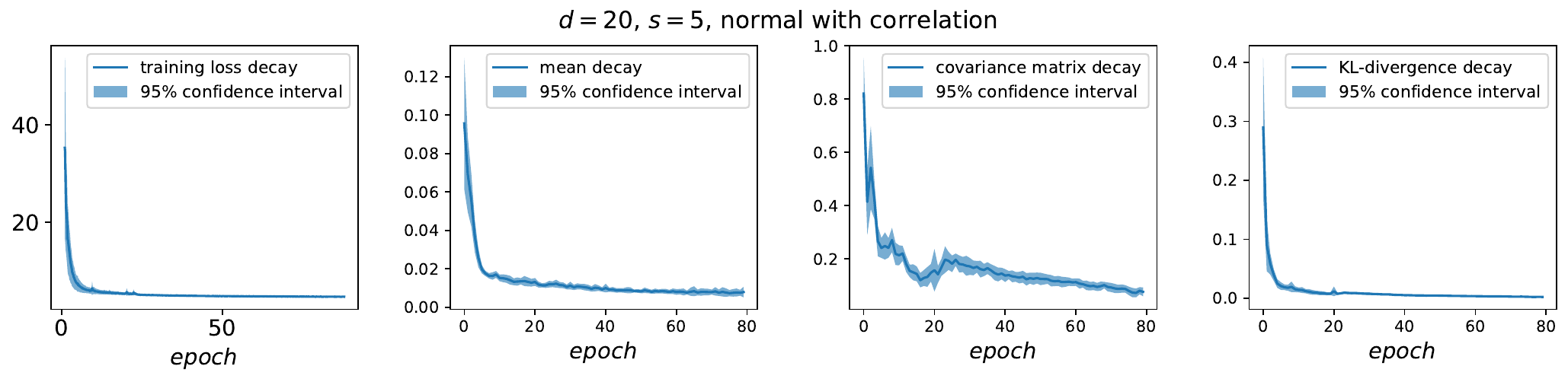}}
    \caption{The decay of training loss and average metrics defined by Eqs.~\eqref{eq_avgmean}-\eqref{eq_avgKL} are displayed alongside 95\% confidence intervals, based on ten simulations of randomly selected hyperparameters. The dimensionality of physical model is $d=20$, $s=5$, with three types of additive noise: uncorrelated multivariate normal distribution (top row), Gaussian mixture distribution with equal weight components (middle row), and correlated multivariate normal distribution (bottom row). The errors consistently converge to the minimum loss within a reasonable parameter range, showing stable trends after a brief transition period. These findings underscore the PR-NF model's effectiveness in addressing high-dimensional uncertainty challenges, highlighting its robustness and stability.}
    \label{fig_dim5}
\end{figure}

\begin{figure}[h!]
    \centering
  {\includegraphics[width=0.95\textwidth]{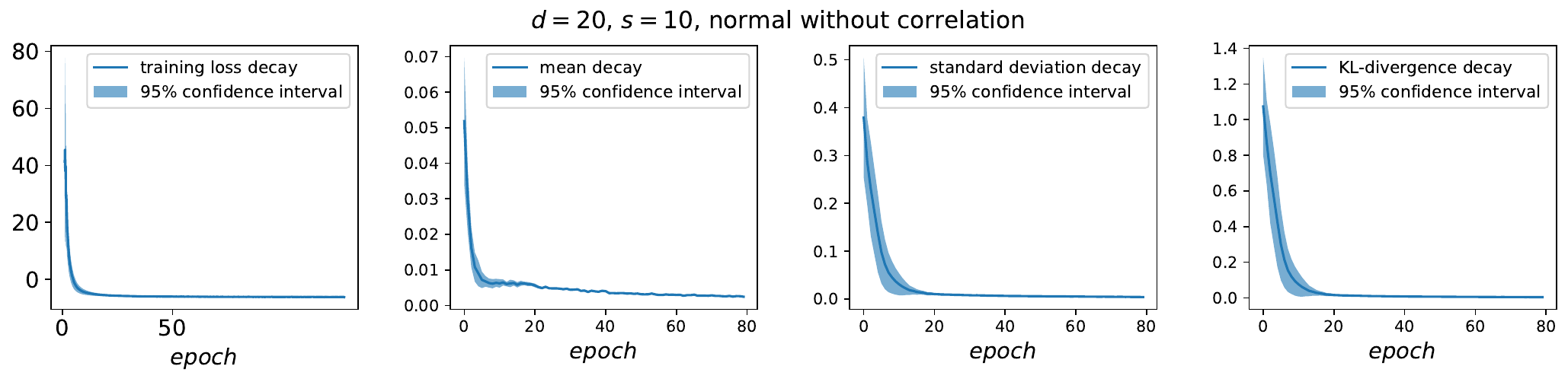}}
  {\includegraphics[width=0.95\textwidth]{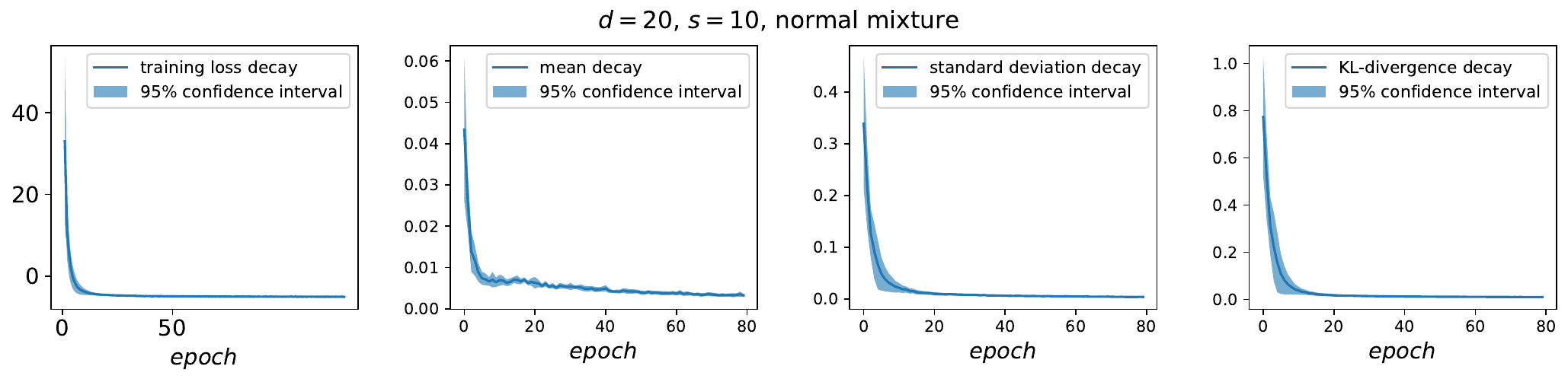}}
  {\includegraphics[width=0.95\textwidth]{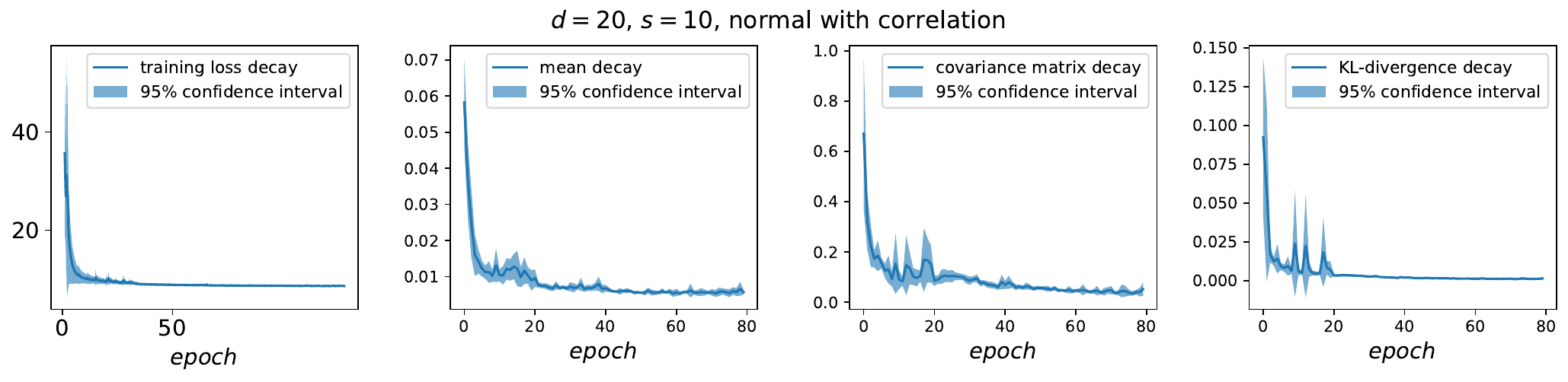}}
    \caption{
    % Same as Fig.~\ref{fig_dim5}, but the dimensionality of physical model is $d=20$, $s=10$.
    The decay of training loss and average metrics defined by Eqs.~\eqref{eq_avgmean}-\eqref{eq_avgKL} are displayed alongside 95\% confidence intervals, based on ten simulations of randomly selected hyperparameters. The dimensionality of physical model is $d=20$, $s=10$, with three types of additive noise: uncorrelated multivariate normal distribution (top row), Gaussian mixture distribution with equal weight components (middle row), and correlated multivariate normal distribution (bottom row). The errors consistently converge to the minimum loss within a reasonable parameter range, showing stable trends after a brief transition period. These findings underscore the PR-NF model's effectiveness in addressing high-dimensional uncertainty challenges, highlighting its robustness and stability.}
    \label{fig_dim10}
\end{figure}

\begin{figure}[h!]
    \centering
  {\includegraphics[width=0.95\textwidth]{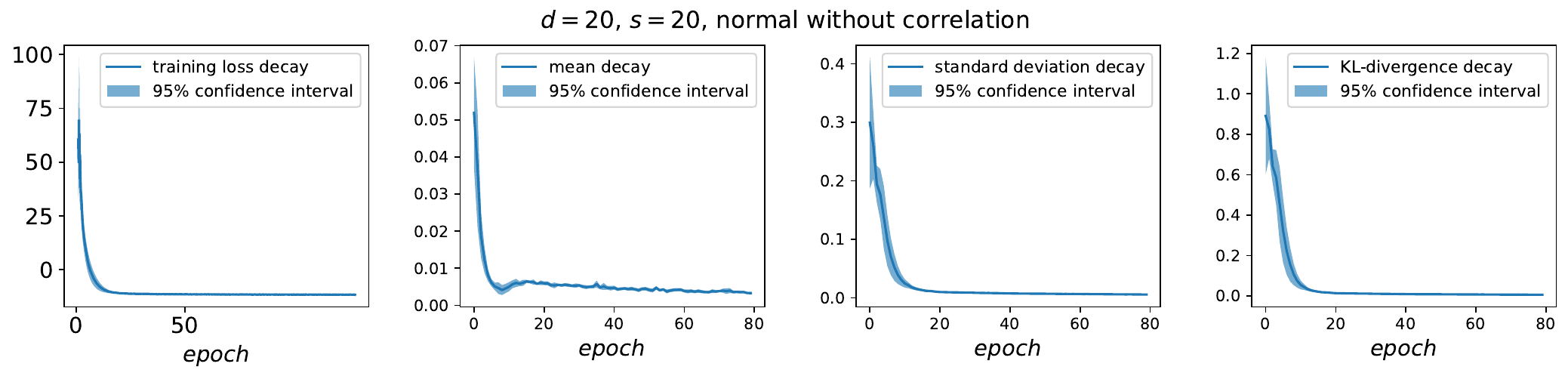}}
  {\includegraphics[width=0.95\textwidth]{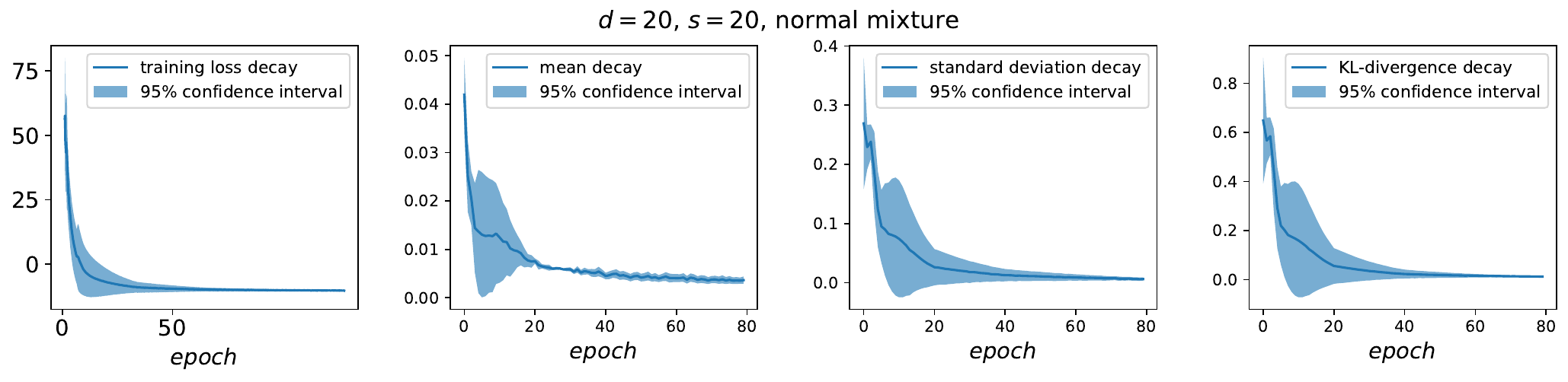}}
  {\includegraphics[width=0.95\textwidth]{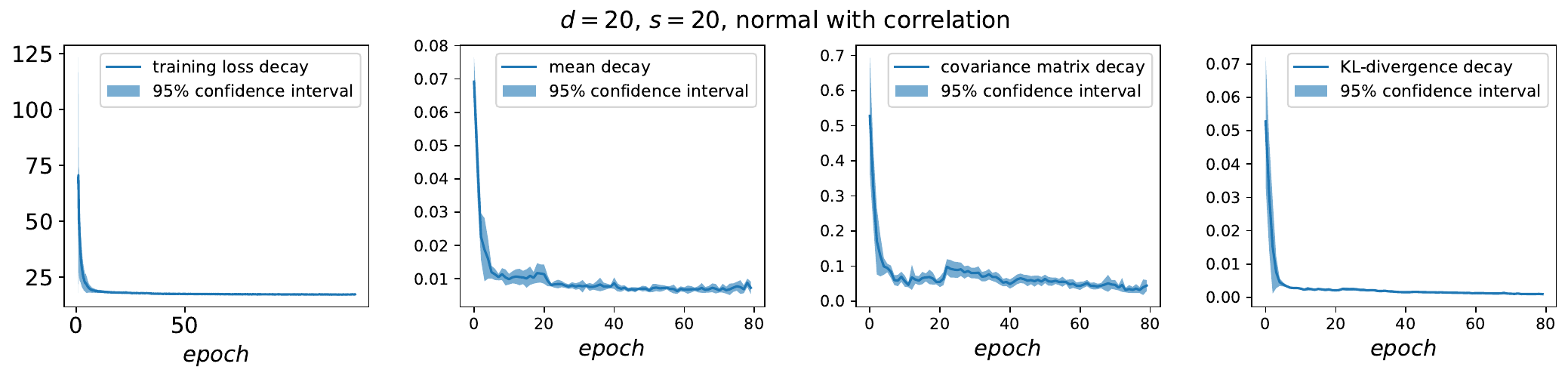}}
    \caption{
    % Same as Fig.~\ref{fig_dim5}, but the dimensionality of physical model is $d=20$, $s=20$.
    The decay of training loss and average metrics defined by Eqs.~\eqref{eq_avgmean}-\eqref{eq_avgKL} are displayed alongside 95\% confidence intervals, based on ten simulations of randomly selected hyperparameters. The dimensionality of physical model is $d=20$, $s=20$, with three types of additive noise: uncorrelated multivariate normal distribution (top row), Gaussian mixture distribution with equal weight components (middle row), and correlated multivariate normal distribution (bottom row). The errors consistently converge to the minimum loss within a reasonable parameter range, showing stable trends after a brief transition period. These findings underscore the PR-NF model's effectiveness in addressing high-dimensional uncertainty challenges, highlighting its robustness and stability.}
    \label{fig_dim20}
\end{figure}

The optimal numerical results within the parameter set are recorded in Table \ref{tb_exhig}. As expected, the PR-NF model effectively evaluates the distribution of the ground truth across various types of additive noise. Notably, even with increasing dimensionality, the PR-NF model consistently maintains a high level of performance in approximation.

\begin{table}[h!]
\footnotesize
\renewcommand{\arraystretch}{1}
\centering
 \caption{The table shows the optimal numerical results of three average metrics derived from the simulations of Figs.~\ref{fig_dim5}, \ref{fig_dim10}, and \ref{fig_dim20}. Even with increasing dimensionality, the PR-NF
model consistently maintains a high level of performance in approximation. The remarkable accuracy performance of the PR-NF model is demonstrated.}
 \label{tb_exhig}
 \vspace{-0.2cm}
  \begin{tabular}{c c c c c c c c c}
  % \hline 
  \hline 
  \multicolumn{9}{c}{$d = 20$, $s = 5$} \\
  \hline
 \multicolumn{3}{c}{normal without correlation} & \multicolumn{3}{c}{normal mixture}&
 \multicolumn{3}{c}{normal with correlation}\\
  \hline
${\rm Err_{mean}}$ &  ${\rm Err_{std}}$ & ${\rm Avg_{KL}}$ & ${\rm Err_{mean}}$ &  ${\rm Err_{std}}$ & ${\rm Avg_{KL}}$ & ${\rm Err_{mean}}$ &  ${\rm Err_{cov}}$ & ${\rm Avg_{KL}}$ \\
  \hline
4.31e-03 &9.78e-03 & 2.51e-02 & 4.25e-03 & 9.00e-03 & 1.15e-02 & 9.00e-03 & 4.51e-02 & 1.49e-03\\
  \hline
  \hline 
  % \hline 
  \multicolumn{9}{c}{$d = 20$, $s = 10$} \\
  \hline
 \multicolumn{3}{c}{normal without correlation} & \multicolumn{3}{c}{normal mixture}&
 \multicolumn{3}{c}{normal with correlation}\\
  \hline
${\rm Err_{mean}}$ &  ${\rm Err_{std}}$ & ${\rm Avg_{KL}}$ & ${\rm Err_{mean}}$ &  ${\rm Err_{std}}$ & ${\rm Avg_{KL}}$ & ${\rm Err_{mean}}$ &  ${\rm Err_{cov}}$ & ${\rm Avg_{KL}}$ \\
  \hline
2.36e-03 &6.55e-03 & 5.06e-03 & 2.75e-03 & 4.75e-03 & 3.98e-03& 6.80e-03 & 3.93e-02 & 1.67e-03\\
  \hline
  \hline 
  % \hline 
  \multicolumn{9}{c}{$d = 20$, $s = 20$} \\
  \hline
 \multicolumn{3}{c}{normal without correlation} & \multicolumn{3}{c}{normal mixture}&
 \multicolumn{3}{c}{normal with correlation}\\
  \hline
${\rm Err_{mean}}$ &  ${\rm Err_{std}}$ & ${\rm Avg_{KL}}$ & ${\rm Err_{mean}}$ &  ${\rm Err_{std}}$ & ${\rm Avg_{KL}}$ & ${\rm Err_{mean}}$ &  ${\rm Err_{cov}}$ & ${\rm Avg_{KL}}$ \\
  \hline
 2.42e-03& 3.06e-03& 7.04e-04 & 2.76e-03 & 7.46e-03 & 1.34e-03 & 7.99e-03 & 3.26e-02 & 8.79e-04\\
  \hline
  % \hline
  \end{tabular}
%   \vspace{-0.4cm}
\end{table}

\subsection{Application to a geologic carbon storage problem}\label{gcs}

In addressing the greenhouse effect caused by trillions of tons of CO$_2$ in the atmosphere since the industrial age, it becomes evident that merely reducing emissions from the industrial, power, and transportation sectors is insufficient to combat climate change. Therefore, alongside deploying clean energy technologies for a decarbonized future, there is a critical need for carbon dioxide removal approaches \cite{alcalde2018estimating}. One promising option is the sequestration of CO$_2$ in geological reservoirs beneath the Earth's surface. This methodology involves capturing CO$_2$ emissions originating from stationary anthropogenic sources, which encompass fossil-fueled power plants and various industrial processes, followed by secure storage in diverse geologic formations, which may comprise saline-bearing formations, depleted oil and gas reservoirs, unmineable coal seams, and organic-rich shales \cite{fan2024advancing}. 

In subsurface storage, CO$_2$ is injected in its supercritical phase to depths where temperature and pressure conditions maintain the CO$_2$ in this phase, thereby optimizing storage volume utilization within reservoir pore spaces \cite{gholami2021leakage}. To ensure the safe and effective deployment of large-scale geologic carbon storage, a comprehensive risk management strategy is required to minimize and mitigate potential risks during CO$_2$ injection phases. A key aspect of this risk management approach is the continuous monitoring of  pressure fields over time to secure the containment of CO$_2$ within the reservoir \cite{fan2023deep}. For example, pressure buildup can pose a risk of leakage, especially when it reaches the fracture initiation pressure, potentially inducing leakage paths in caprock or faulting seals. These paths may serve as conduits for CO$_2$ migration from the designated storage location, potentially contaminating underground drinking water resources or escaping into the atmosphere \cite{chen2018geologic}. Conventionally, obtaining a reliable estimation of 3D pressure distributions over time in the subsurface relies on the inverse modeling process. Firstly, a forward reservoir model is developed and then calibrated using observational data through inverse modeling. Subsequently, this calibrated model is utilized to predict pressure distribution. However, this approach for real-time forecasting encounters challenges due to the computational expense of inverse modeling and the unreliability of predictions stemming from limited observations.

%Obtaining a reliable estimation of 3D pressure distributions over time in the subsurface relies on the inverse modeling process, typically comprising two sequential steps. First, a forward model (reservoir model) is developed to simulate pressure distribution. Once CO$_2$ injection begins, observational data are compared to the model simulation values to calibrate reservoir model parameters, iteratively refining the model until an acceptable fit is achieved. Subsequently, the calibrated model is employed to predict future pressure fields \cite{fan2023deep}. However, this conventional approach presents challenges for real-time forecasting. Firstly, due to the heterogeneous nature of geological realizations and the discretization of commercial-scale reservoir models into thousands of gridblocks, running simulations for large-scale operations is computationally demanding. Secondly, the computationally intensive nature of the inverse modeling step, requiring thousands of expensive forward simulations iteratively, impedes automated and fast updates. Lastly, because the measurement data are monitored only at sparse well locations, where the observation data are much smaller in quantity than the uncertain parameters, the model inversion is ill-posed and could lead to unreliable predictions.

\begin{figure}[h!]
    \centering
  {\includegraphics[width=0.95\textwidth]{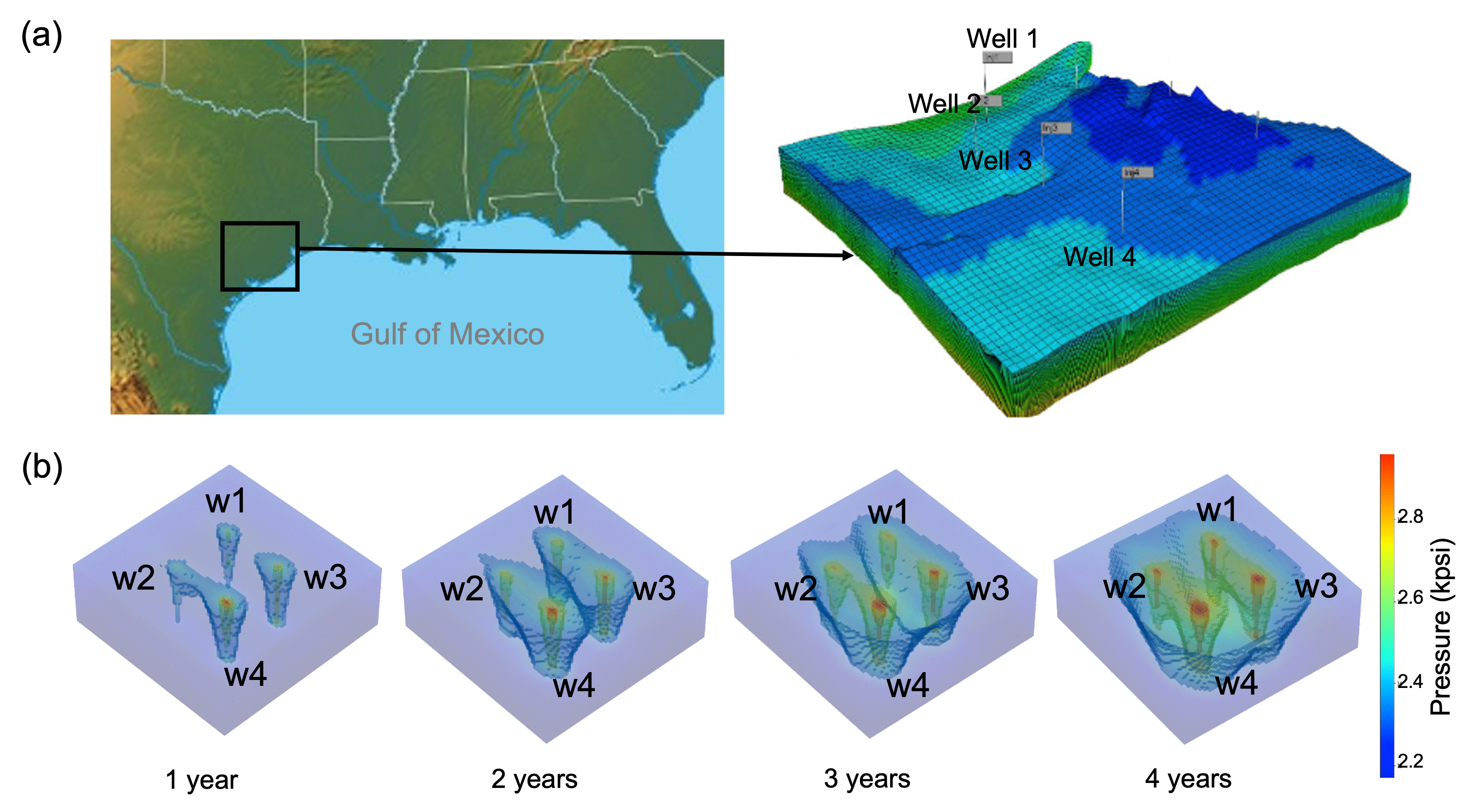}}
  \vspace{-0.3cm}
    \caption{Geologic carbon storage involves sequestering carbon dioxide (CO$_2$) underground in geological formations, aiming to mitigate the rising levels of CO$_2$ in the atmosphere. Monitoring pressure fields during the injection process is crucial for detecting potential leakage, which could contaminate underground drinking water or lead to CO$_2$ escaping into the atmosphere.
(a) In this work, we developed a physics-based reservoir simulation model adapted to a fluvial depositional environment, specifically focusing on the Gulf of Mexico High Island 24L. The model is designed to simulate the migration of CO$_2$ within subsurface formations. Four vertical injection wells were utilized to achieve the cumulative CO$_2$ injection target, with a constant injection rate over 30 years. 
(b) During the injection phase, the pressure buildup in the reservoir changes the state of stress, potentially increasing the risk of leakage. The objective is to forecast the entire 3D pressure fields over the injection period based on sparse observations from four injection wells combined with reservoir simulations. 
}
    \label{GCS_workflow}
\end{figure}

To generate the training dataset, we developed a physics-based reservoir simulation model representing a fluvial depositional environment, specifically the Gulf of Mexico High Island 24L, to simulate the migration of CO$_2$ in subsurface formations, as illustrated in Figure \ref{GCS_workflow}a. The geological model is structured with grid cells $54\times 48 \times 54$ along the $x$, $y$, and $z$ axes, with cell dimensions of 750 feet in the horizontal plane and 10 feet vertically. Four vertical injection wells are strategically positioned to attain cumulative CO$_2$ storage. Additionally, three production wells are positioned to prevent reservoir over-pressurization, maintaining a constant bottom-hole pressure constraint. We utilized CMG-GEM for reservoir simulations \cite{CMG}, generating 108 simulations collected every two months over 30 years, each consisting of 180 time steps. The input parameters include geological model realizations, cumulative CO$_2$ injection targets, and injection allocations for well pairs, while the output comprises pressure distributions.

In this study, we employ the developed PR-NF method to directly forecast the 3D pressure fields over time using observation data obtained from the injection wells, thereby avoiding the need for computationally demanding inverse modeling, as shown in Figure \ref{GCS_workflow}b. To implement this approach, we initially extract observation data from four injection wells, including every other layer between 33 and 51, resulting in a total of 40 observation variables. Subsequently, we employ a dimension reduction technique by utilizing a convolutional autoencoder to map the high-dimensional 3D pressure fields into a lower-dimensional latent space with 20 dimensions. Next, we forecast the 3D pressure fields in the latent space at the time step $t+1$ based on the observations at time step $t$. Finally, the predicted latent variables are transformed back to their original 3D space.

Figure \ref{GCS_pf} illustrates the forecasted pressure fields across layers 20, 30, and 40 at the end of the injection process for a sample in the testing set. The black triangles represent the four injection well locations. The first column represents the ground truths obtained through reservoir simulations, the second column illustrates the predictions generated by the developed PR-NF method, and the final column describes the error between these two sets of data. The comparative analysis between the synthetic truths and prediction results highlights minimal disparities in the pressure distribution. This observation demonstrates the PR-NF method's adeptness in capturing the spatial evolution characteristics of pressure fields, affirming its reliability as an effective monitoring tool for CO$_2$ storage operations.

\begin{figure}[h!]
    \centering
  {\includegraphics[width=0.95\textwidth]{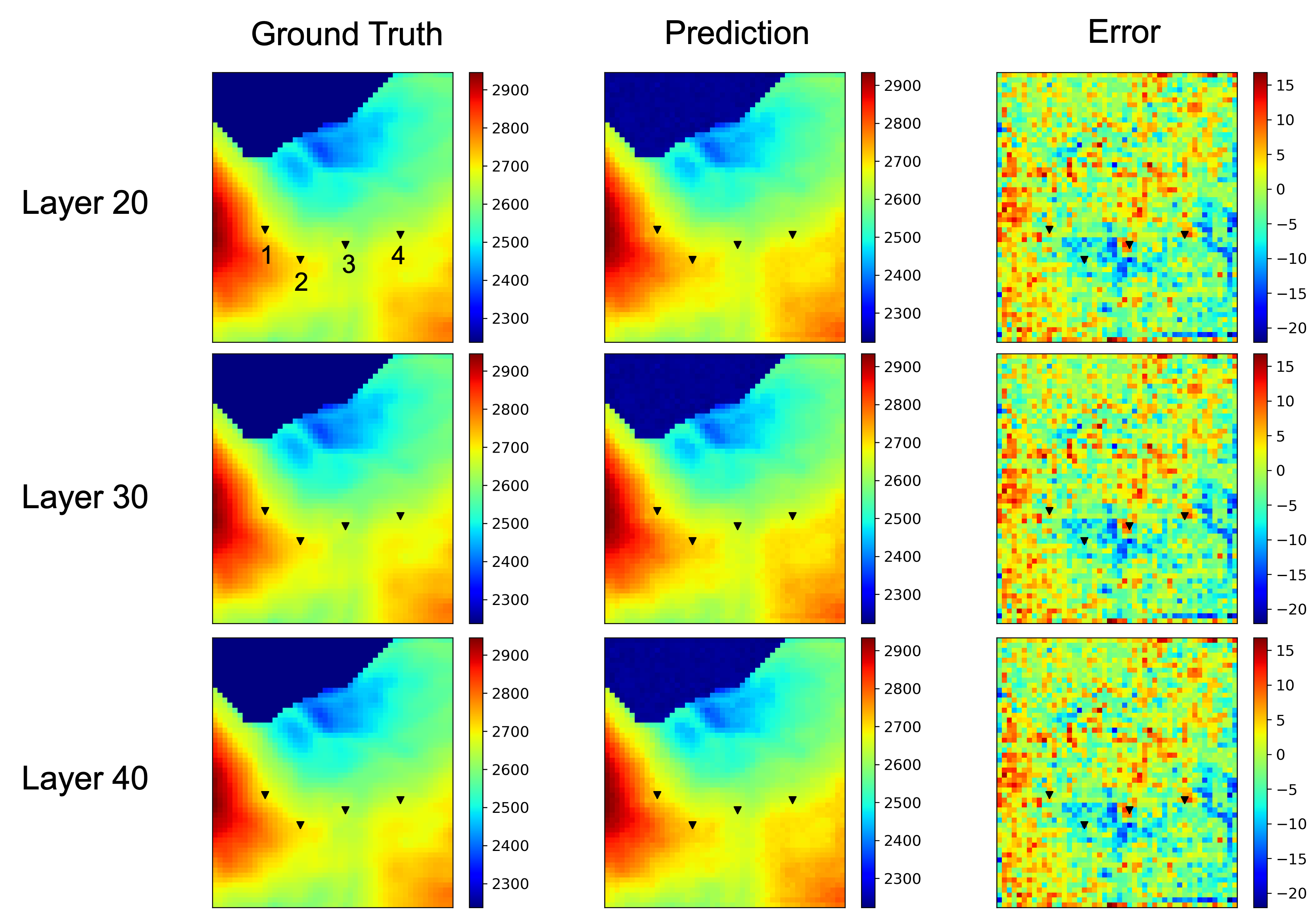}}
  \vspace{-0.3cm}
    \caption{Predictions of pressure fields across three layers at the termination of the injection process. The smaller errors between the ground truth and prediction demonstrates the effectiveness of the PR-NF approach in accurately forecasting the spatial evolution of pressure fields.  }
    \label{GCS_pf}
\end{figure}

To further explore the predictive capabilities of the PF-NF method, we analyze the temporal pressure distribution at four injection wells. Figure \ref{well_pressure} presents a comparison between the predicted pressure  and the actual pressure values over the injection process, using the same testing sample illustrated in Figure \ref{GCS_pf}. The remarkable consistency observed in the pressure predictions at each well location confirms the PF-NF method's ability to accurately forecast temporal pressure distribution during CO$_2$ injection. Notably, our proposed method not only enhances forecasting accuracy but also significantly expedites the forecasting process. For instance, the CMG reservoir simulator necessitates approximately 4.5 hours to complete a simulation, but the well-trained PR-NF model requires minutes to forecast 3D pressure fields. This accelerated forecasting capability represents a valuable enhancement to conventional forecasting workflow.

\begin{figure}[h!]
    \centering
  {\includegraphics[width=0.95\textwidth]{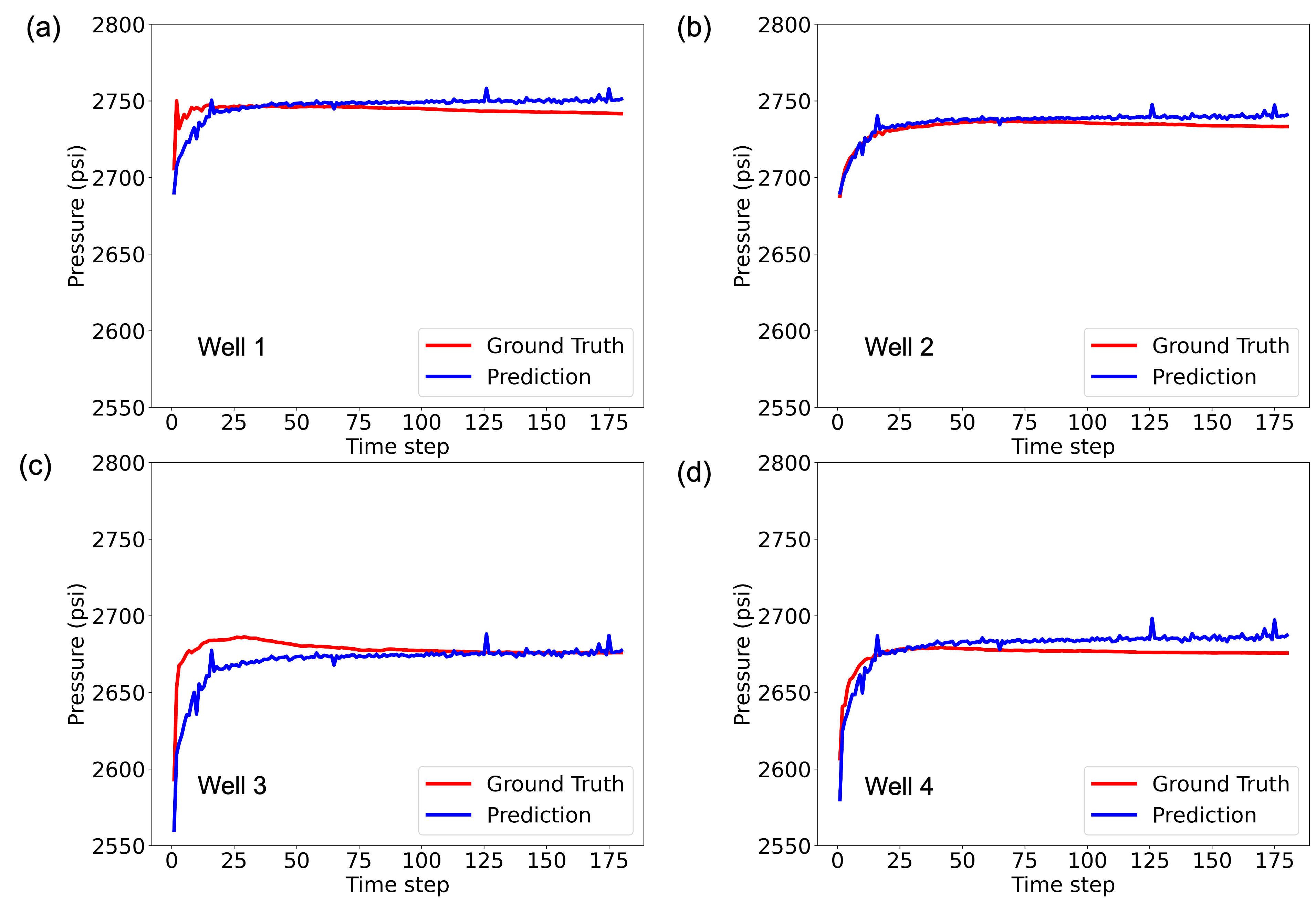}}
  \vspace{-0.3cm}
    \caption{Predictions of pressure over time at four injection wells for layer 30. The accurate predictions observed at each well effectively demonstrate the effectiveness of the PR-NF approach in forecasting pressure dynamics during the injection process. The ability to capture time-dependent features establishes it as a reliable tool for safer and more efficient CO$_2$ storage applications.}
    \label{well_pressure}
\end{figure}

% \subsection{E3SM land model calibration problem}\label{ex_elm}
% In this numerical experiment, we apply the PR-NF model on E3SM land model (ELM) calibration and simulation with eight parameters at the Missouri Ozark AmeriFlux forest site (synthetic data only). We use the annual average latent heat fluxes (LH) collected in the site from 2006 to 2010  as outputs. The PR-NF model provides forward and inverse mapping between eight inputs and five outputs. 

% The training dataset has 999 samples. The neural network has 128 hidden neurons for both forward and inverse directions. Specifically, the  network of forward mapping is trained with 80 epochs using 19 seconds (Average of 10 times repeated trains) and   the inverse mapping is trained with 200 epochs using 54 seconds.
% Fig.~\ref{ELM_inverse} and Fig.~\ref{ELM_forward}  present the parameter estimation  (inverse problem) and LHs estimation results (forward problem) at the test point of the PR-NF model, respectively.
% \begin{figure}[h!]
%     \centering
%   {\includegraphics[width=0.95\textwidth]{figures/ELM_inverse.png}}
%   \vspace{-0.3cm}
%     \caption{ELM inverse}
%     \label{ELM_inverse}
% \end{figure}

% \begin{figure}[h!]
%     \centering
%   {\includegraphics[width=0.55\textwidth]{figures/ELM_forward.png}}
%   \vspace{-0.3cm}
%     \caption{ELM forward}
%     \label{ELM_forward}
% \end{figure}

\section{Conclusion}\label{sec:con}
% We introduced a neural network based on normalizing flow to train a generative model for uncertainty estimation. The simple pseudo-reversible neural network framework is capable of effectively addressing high-dimensional uncertainty quantification (UQ) problems.
% Our proposed normalizing flow model leverages both parameter and observation information as input to estimate conditional distributions. These conditional distributions provide accurate and efficient quantification of uncertainty propagation in both forward and inverse scenarios. Our future objective is to extend the applicability of our proposed method to handle more intricate problems, particularly those involving time-dependent data. This extension will provide a better and clearer understanding of physical models, such as how uncertainty evolves over time. Understanding the evolution of uncertainties can offer valuable insights for controlling experiments and numerical simulations.

In conclusion, our study introduces the conditional pseudo-reversible normalizing flow (PR-NF) model for surrogate modeling in quantifying uncertainty propagation. Through rigorous convergence analysis and three numerical experiments, we have demonstrated the efficacy and versatility of our approach.
Numerical experiments, including a synthetic example verifying algorithm accuracy, exploration of high-dimensional uncertainty problems, and application to a real-world geologic carbon storage challenge, collectively underscore the robustness and applicability of the PR-NF model. By directly learning and efficiently generating samples from conditional probability density functions, our model simplifies the modeling process while maintaining accuracy, without requiring prior knowledge of error terms or underlying physics functions.
In summary, our work contributes to advancing surrogate modeling techniques by offering a data-driven approach that enhances efficiency and accuracy in uncertainty quantification.

\section*{Acknowledgement}

This material is based upon work supported by the U.S. Department of Energy, Office of Science, Office of Advanced Scientific Computing Research, Applied Mathematics program under the contract ERKJ387, Office of Fusion Energy Science, and Scientific Discovery through Advanced Computing (SciDAC) program, at the Oak Ridge National Laboratory, which is operated by UT-Battelle, LLC, for the U.S. Department of Energy under Contract DE-AC05-00OR22725.
We thank Hongsheng Wang and Seyyed A. Hosseini for processing reservoir simulation data for studying the geologic carbon storage problem.

\bibliographystyle{abbrv}
\bibliography{UQ_ref}
\end{document}